\newtheoremstyle{seriftheorem}
  {1.25\topsep}   				% ABOVESPACE
  {1.25\topsep}   				% BELOWSPACE
  {\normalfont}  				% BODYFONT
  {}       						% INDENT
  {\sffamily \bfseries} 		% HEADFONT
  {.}         					% HEADPUNCT
  {.5em} 						% HEADSPACE
  {}          					% CUSTOM-HEAD-SPEC
\theoremstyle{seriftheorem}
\newtheorem{myprp}{Proposition}
\newtheorem{mylem}{Lemma}
\begin{document}

\title{A Comparative Study of Gamma Markov Chains for Temporal Non-Negative Factorization}

\author{
Louis Filstroff\,$^{1}$ \hspace{0.5em} Olivier Gouvert\,$^2$ \hspace{0.5em} Cédric Févotte\,$^3$ \hspace{0.5em} Olivier Cappé\,$^4$ \\
\small $^1$ Department of Computer Science, School of Science, Aalto University, Finland \\
\small $^2$ Mila - Quebec Artificial Intelligence Institute \\
\small $^3$ IRIT, Université de Toulouse, CNRS, France \\
\small $^4$ DI ENS, CNRS, INRIA, Université PSL
}

\maketitle

\begin{abstract}
Non-negative matrix factorization (NMF) has become a well-established class of methods for the analysis of non-negative data. In particular, a lot of effort has been devoted to probabilistic NMF, namely estimation or inference tasks in probabilistic models describing the data, based for example on Poisson or exponential likelihoods. When dealing with time series data, several works have proposed to model the evolution of the activation coefficients as a non-negative Markov chain, most of the time in relation with the Gamma distribution, giving rise to so-called temporal NMF models. In this paper, we review four Gamma Markov chains of the NMF literature, and show that they all share the same drawback: the absence of a well-defined stationary distribution. We then introduce a fifth process, an overlooked model of the time series literature named BGAR(1), which overcomes this limitation. These temporal NMF models are then compared in a MAP framework on a prediction task, in the context of the Poisson likelihood.
\par\vskip\baselineskip\noindent
\textbf{Keywords:} Non-negative matrix factorization, Time series data, Gamma Markov chains, MAP estimation 
\end{abstract}

\section{Introduction} \label{sec-intro}

\subsection{Non-negative matrix factorization}

Non-negative matrix factorization (NMF) \citep{paatero1994positive,lee1999learning} has become a widely used class of methods for analyzing non-negative data. Let us consider $N$ samples in $\mathbb{R}^{F}_{+}$. We can store these samples column-wise in a matrix, which we denote by $\mathbf{V}$ (therefore of size $F \times N$). Broadly speaking, NMF aims at finding an approximation of $\mathbf{V}$ as the product of two non-negative matrices:
\begin{equation}
\mathbf{V} \simeq \mathbf{WH},
\label{eq-nmf-approx}
\end{equation}
where $\mathbf{W}$ is of size $F \times K$, and $\mathbf{H}$ is of size $K \times N$. $\mathbf{W}$ and $\mathbf{H}$ are referred to as the dictionary and the activation matrix, respectively. The factorization rank $K$ is usually chosen such that $K \ll \min(F,N)$, hence producing a low-rank approximation of $\mathbf{V}$. This factorization is often retrieved as the solution of an optimization problem, which we can write as:
\begin{equation}
\min_{\mathbf{W} \geq 0,~\mathbf{H} \geq 0} D(\mathbf{V}|\mathbf{WH}),
\label{eq-nmf-min}
\end{equation}
where $D$ is a measure of fit between $\mathbf{V}$ and its approximation $\mathbf{WH}$, and the notation $\mathbf{A} \geq 0$ denotes the non-negativity of the entries of the matrix $\mathbf{A}$. One of the key aspects to the success of NMF is that the non-negativity of the factors $\mathbf{W}$ and $\mathbf{H}$ yields an interpretable, part-based representation of each sample: $\mathbf{v}_n \simeq \mathbf{Wh}_n$ \citep{lee1999learning}.

Various measures of fit have been considered in the literature, for instance the family of $\beta$-divergences \citep{fevotte2011algorithms}, which includes some of the most popular cost functions in NMF, such as the squared Euclidian distance, the generalized Kullback-Leibler divergence, or the Itakura-Saito divergence. As it turns out, for many of these cost functions, the optimization problem described in Eq.~\eqref{eq-nmf-min} can be shown to be equivalent to the joint maximum likelihood estimation of the factors $\mathbf{W}$ and $\mathbf{H}$ in a statistical model, that is:
\begin{equation}
\max_{\mathbf{W},\mathbf{H}} p(\mathbf{V}|\mathbf{W},\mathbf{H}).
\label{eq-nmf-max}
\end{equation}
This leads the way to so-called \textit{probabilistic} NMF, i.e., estimation or inference tasks in probabilistic models whose observation distribution may be written as:
\begin{equation}
\mathbf{v}_{n} \sim p(~.~;\mathbf{Wh}_{n}, \boldsymbol{\Theta}), \quad \mathbf{W} \geq 0, \quad \mathbf{H} \geq 0,
\label{eq-prob-nmf}
\end{equation}
that is to say that the distribution of $\mathbf{v}_n$ is parametrized by the dot product of the factors $\mathbf{W}$ and $\mathbf{h}_n$. Other potential parameters of the distribution are generically denoted by $\boldsymbol{\Theta}$. Most of the time these distributions are such that $\mathbb{E}(\mathbf{v}_n) = \mathbf{Wh}_n$.

This large family encompasses many well-known models of the literature, for example models based on the Gaussian likelihood \citep{schmidt2009bayesian} or the exponential likelihood \citep{fevotte2009nonnegative,hoffman2010bayesian}. It also includes factorization models for count data, which are most of the time based on the Poisson distribution\footnote{These models are sometimes generically referred to as ``Poisson factorization'' or ``Poisson factor analysis''.} \citep{canny2004gap,cemgil2009bayesian,zhou2012beta,gopalan2015scalable}, but can also make use of distributions with a larger tail, e.g., the negative binomial distribution \citep{zhou2018nonparametric}. Finally, more complex models using the compound Poisson distribution have been considered \citep{simsekli2013learning,basbug2016hierarchical,gouvert2019recommendation}, allowing to extend the use of the Poisson distribution to various supports $(\mathbb{N}, \mathbb{R}_+, \mathbb{R}, \dotsc)$.

In the vast majority of the aforementioned works, prior distributions are assumed on the factors $\mathbf{W}$ and $\mathbf{H}$. This is sometimes referred to as \textit{Bayesian} NMF. In this case, the columns of $\mathbf{H}$ are most of the time assumed to be independent:
\begin{equation}
p(\mathbf{H}) = \prod_{n=1}^{N} p(\mathbf{h}_n).
\label{eq-h-idp}
\end{equation}
The factors being non-negative, a standard choice is the Gamma distribution\footnote{Throughout the article, we consider the ``shape and rate'' parametrization of the Gamma distribution, i.e. $\text{Gamma}(x|\alpha, \beta) \propto x^{\alpha-1} \exp (-\beta x).$}, which can be sparsity-inducing if the shape parameter is chosen to be lower than one. The inverse Gamma distribution has also been considered.

\subsection{Temporal structure of the activation coefficients}

In this work, we are interested in the analysis of specific matrices $\mathbf{V}$ whose columns cannot be treated as exchangeable, because the samples $\mathbf{v}_n$ are correlated. Such a scenario arises in particular when the columns of $\mathbf{V}$ describe the evolution of a process over time.

From a modeling perspective, this means that correlation should be introduced in the statistical model between successive columns of $\mathbf{V}$. This can be achieved by lifting the prior independence assumption of Eq.~\eqref{eq-h-idp}, thus introducing correlation between successive columns of $\mathbf{H}$. In this paper, we consider a Markov structure on the columns of $\mathbf{H}$:
\begin{equation}
p(\mathbf{H}) = p(\mathbf{h}_1) \prod_{n \geq 2} p(\mathbf{h}_n | \mathbf{h}_{n-1}).
\label{eq-h-markov}
\end{equation}
We will refer to such a model as a $\textit{dynamical}$ NMF model. Note that recent works go beyond the Markovian assumption, i.e., assume dependency with multiple past time steps, and are labeled as ``deep'' \citep{gong2017deep,guo2018deep}.

Several works \citep{fevotte2013non,schein2016poisson,schein2019poisson} assume that the transition distribution $p(\mathbf{h}_n | \mathbf{h}_{n-1})$ makes use of a transition matrix $\boldsymbol{\Pi}$ of size $K \times K$ to capture relationships between the different components. In this case, the distribution of $h_{kn}$ depends on a linear combination of all the components at the previous time step:
\begin{equation}
p(\mathbf{h}_n | \mathbf{h}_{n-1}) = \prod_{k} p(h_{kn}|\sum_l \pi_{kl} h_{l(n-1)}).
\label{eq-trans-h-matrix}
\end{equation}

In this work, we will restrict ourselves to $\boldsymbol{\Pi} = \mathbf{I}_K$. Equivalently, this amounts to assuming that the $K$ rows of $\mathbf{H}$ are a priori independent, and we have
\begin{equation}
p(\mathbf{H}) = \prod_{k} p(h_{k1}) \prod_{n \geq 2}p(h_{kn}|h_{k(n-1)}).
\label{eq-h-markov-idp}
\end{equation}
We will refer to such a model as a \textit{temporal} NMF model.

A first way of dealing with the temporal evolution of a non-negative variable is to map it to $\mathbb{R}_+$. It is then commonly assumed that this variable evolves in Gaussian noise. This is for example exploited in the seminal work of \citet{blei2006dynamic} on the extension of latent Dirichlet allocation to allow for topic evolution\footnote{Note that this particular mapping is actually slightly more complex, as the $K$-dimensional real vector must be mapped to the $(K-1)$ simplex due to further constraints in the model.}. A similar assumption is made in \citet{charlin2015dynamic}, which introduces dynamics in the context of a Poisson likelihood (factorizing the user-item-time tensor). Gaussian assumptions allow to use well-known computational techniques, such as Kalman filtering, but result in loss of interpretability.

We will focus in this paper on naturally non-negative Markov chains. Various non-negative Markov chains have been proposed in the NMF literature (see Section~\ref{sec-study} and references therein). They are all built in relation with the Gamma (or inverse Gamma) distribution. As a matter of fact, these models exhibit the same drawback: the chains all have a degenerate stationary distribution. This can lead to undesirable behaviors, such as the instability or the degeneracy of realizations of the chains. We emphasize that this is problematic from the probabilistic perspective only, since these prior distributions may still represent an appropriate regularization in a MAP setting.

\subsection{Contributions and organization of the paper}

The contributions of this paper are 4-fold:
\begin{itemize}
	\item We review the existing non-negative Markov chains of the NMF literature and discuss some of their limitations. In particular we show that these chains all have a degenerate stationary distribution;
	\item We present an overlooked non-negative Markov chain from the time series literature, the first-order autoregressive Beta-Gamma process, denoted as BGAR(1) \citep{lewis1989gamma}, whose stationary distribution is Gamma. To the best of our knowledge, this particular chain has never been considered to model temporal dependencies in matrix factorization problems;
	\item We derive majorization-minimization-based algorithms for maximum a posteriori (MAP) estimation in the NMF models (with a Poisson likelihood) with four of the presented prior structures on $\mathbf{H}$, including BGAR(1);
	\item We compare the performance of all these models on a prediction task on three real-world datasets.
\end{itemize}

The paper is organized as follows. Section~\ref{sec-study} introduces and compares non-negative Markov chains from the literature. Section~\ref{sec-map} presents MAP estimation in temporal NMF models. Experimental work is conducted in Section~\ref{sec-exp}, before concluding in Section~\ref{sec-ccl}.

\section{Comparative study of Gamma Markov chains} \label{sec-study}

This section reviews existing models of Gamma Markov chains, i.e., Markov chains which evolve in $\mathbb{R}_{+}$ in relation with the Gamma distribution. We have identified four different models in the NMF literature:
\begin{enumerate}
	\item Chaining on the rate parameter of a Gamma distribution (Section~\ref{sec-rate});
	\item Chaining on the rate parameter of a Gamma distribution with an auxiliary variable (Section~\ref{sec-hier-rate});
	\item Chaining on the shape parameter of a Gamma distribution (Section~\ref{sec-shape});
	\item Chaining on the shape parameter of a Gamma distribution with an auxiliary variable (Section~\ref{sec-hier-shape}).
\end{enumerate}
As will be discussed in these subsections, these four models all lack a well-defined stationary distribution, which leads to the degeneracy of the realizations of the chains. A fifth model from the time series literature, called BGAR(1), is presented in Section~\ref{sec-bgar}. It is built to have a well-defined stationary distribution (it is marginally Gamma distributed). The realizations of the chain are not degenerate and exhibit some interesting properties. To the best of our knowledge, this kind of process has never been used in a probabilistic NMF problem to model temporal evolution.

Throughout the section, $(h_n)_{n \geq 1}$ denotes the (scalar) Markov chain of interest, where the index $k$ as in Eq.~\eqref{eq-h-markov-idp} has been dropped for enhanced readability. It is further assumed that $h_1$ is set to a fixed, deterministic value.

\subsection{Chaining on the rate parameter} \label{sec-rate}

\subsubsection{Model} Let us consider a general Gamma Markov chain model with a chaining on the rate parameter:
\begin{equation}
h_{n}|h_{n-1} \sim \text{Gamma} \left( \alpha, \frac{\beta}{h_{n-1}} \right).
\label{eq-gmc-rate-def}
\end{equation}
As it turns out, Eq.~\eqref{eq-gmc-rate-def} can be rewritten as a multiplicative noise model:
\begin{equation}
h_{n} = h_{n-1} \times \phi_n,
\label{eq-gmc-rate-noise}
\end{equation}
where $\phi_n$ are i.i.d. Gamma random variables with parameters $(\alpha, \beta)$. We have
\begin{equation}
\mathbb{E}(h_{n}|h_{n-1}) = \frac{\alpha}{\beta}h_{n-1}, \quad
\text{var}(h_{n}|h_{n-1}) = \frac{\alpha}{\beta^2}h^2_{n-1}.
\end{equation}

This model was introduced in \citet{fevotte2009nonnegative} to add smoothness to the activation coefficients in the context of audio signal processing. The parameters were set to $\alpha > 1$ and $\beta = \alpha - 1$, such that the mode would be located at $h_{n} = h_{n-1}$. It is also a particular case of the dynamical model of \citet{fevotte2013non} and is considered in \citet{virtanen2020dynamic}. A similar inverse Gamma Markov chain was also considered in \citet{fevotte2009nonnegative} and in \citet{fevotte2011majorization}.

\subsubsection{Analysis} From Eq.~\eqref{eq-gmc-rate-noise} we can write:
\begin{equation}
h_{n} = h_1 \prod_{i=2}^{n} \phi_i.
\end{equation}

The independence of the $\phi_i$ yields:
\begin{align}
\mathbb{E}(h_{n}) & = h_1 \left( \frac{\alpha}{\beta} \right)^{n-1}, \\
\text{var}(h_{n}) & = h_1^2 \left[ \left( \frac{\alpha^2}{\beta^2} + \frac{\alpha}{\beta^2} \right)^{n-1} - \left( \frac{\alpha^2}{\beta^2} \right)^{n-1} \right].
\label{eq-gmc-rate-moments}
\end{align}

We enumerate all the possible regimes ($n \rightarrow +\infty)$, which all give rise to degenerate stationary distributions for different reasons:
\begin{itemize}
	\item $\beta > \sqrt{\alpha(\alpha+1)}$: both mean and variance go to zero;
	\item $\beta = \sqrt{\alpha(\alpha+1)}$: variance converges to 1, however the mean goes to zero;
	\item $\beta \in \left]\alpha;\sqrt{\alpha(\alpha+1)}\right[$: variance goes to infinity, mean goes to zero;
	\item $\beta = \alpha$: mean is equal to 1, but the variance goes to infinity;
	\item $\beta < \alpha$: both mean and variance go to infinity.
\end{itemize}

Each subplot of Figure~\ref{fig-rate} displays ten independent realizations of the chain, for a different set of parameters $(\alpha,\beta)$. As we can see, the realizations of the chain either collapse to 0, or diverge.

\subsection{Hierarchical chaining on the rate parameter} \label{sec-hier-rate}

\subsubsection{Model} Let us consider the following Gamma Markov chain model introduced in \citet{cemgil2007conjugate}:
\begin{align}
z_{n}|h_{n-1} & \sim \text{Gamma}(\alpha_z, \beta_z h_{n-1}), \label{eq-gmc-cd-def1} \\
h_{n}|z_{n} & \sim \text{Gamma}(\alpha_h, \beta_h z_{n}). \label{eq-gmc-cd-def2}
\end{align}
As it turns out, this model can also be rewritten as a multiplicative noise model:
\begin{equation}
h_{n} = h_{n-1} \times \tilde{\phi}_n,
\label{eq-gmc-cd-noise}
\end{equation}
where $\tilde{\phi}_n$ are i.i.d. random variables defined as the ratio of two independent Gamma random variables with parameters $(\alpha_h, \beta_h)$ and $(\alpha_z, \beta_z)$. The distribution of $\tilde{\phi}_n$ is actually known in closed form, namely
\begin{equation}
\tilde{\phi}_n \sim \text{BetaPrime} \left( \alpha_h, \alpha_z, 1, \tilde{\beta} \right),
\label{eq-gmc-cd-betaprime}
\end{equation}
with $\tilde{\beta} = \frac{\beta_z}{\beta_h}$ (see Appendix~\ref{app-a} for a definition). We have
\begin{align}
\mathbb{E}(h_{n}|h_{n-1}) & = \tilde{\beta} \frac{\alpha_h}{\alpha_z-1} h_{n-1} & \text{for~} \alpha_z > 1, \\
\text{var}(h_{n}|h_{n-1}) & = \tilde{\beta}^2 \frac{\alpha_h ( \alpha_h + \alpha_z - 1)}{(\alpha_z-1)^2 (\alpha_z-2)} h^2_{n-1} & \text{for~} \alpha_z > 2.
\end{align}

This model is less straightforward in its construction than the previous one, as it makes use of an auxiliary variable $z_n$ (note that a similar inverse Gamma construction was proposed as well in \citet{cemgil2007conjugate}). There are two motivations behind the introduction of this auxiliary variable:
\begin{enumerate}
	\item Firstly, it ensures what is referred to as ``positive correlation'' in \citet{cemgil2007conjugate}, i.e., $\mathbb{E}(h_n|h_{n-1}) \propto h_{n-1}$ (something the model described by Eq.~\eqref{eq-gmc-rate-def} does as well).
	\item Secondly, it ensures the so-called conjugacy of the model, i.e., the conditional distributions $p(z_{n}|h_{n-1},h_{n})$ and $p(h_{n}|z_{n},z_{n+1})$ remain Gamma distributions. Indeed, these are the distributions of interest when considering Gibbs sampling or variational inference. This property is not satistfied by the model described by Eq.~\eqref{eq-gmc-rate-def} (i.e., $p(h_n|h_{n-1},h_{n+1})$ is neither Gamma, nor a known distribution).
\end{enumerate}
This particular chain has been used in the context of audio signal processing in \citet{virtanen2008bayesian} (under the assumption of a Poisson likelihood, which does not fit the nature of the data), and also to model the evolution of user and item preferences in the context of recommender systems \citep{jerfel2017dynamic,do2018gamma}.

\subsubsection{Analysis} From Eq.~\eqref{eq-gmc-cd-noise}, we can write:
\begin{equation}
h_{n} = h_1 \prod_{i=2}^{n} \tilde{\phi}_{i}.
\end{equation}

We have by independence of the $\tilde{\phi}_i$:
\begin{align}
\mathbb{E}(h_{n}) & = h_{1} \left( \tilde{\beta} \frac{\alpha_h}{\alpha_z - 1} \right)^{n-1} \qquad \qquad \quad \text{for~}\alpha_z > 1,  \\
\text{var}(h_{n}) & = h_1^2 \tilde{\beta}^{2(n-1)} \left[
\left( \frac{\alpha_h^2}{(\alpha_z - 1)^2} + \frac{\alpha_h ( \alpha_h + \alpha_z - 1)}{(\alpha_z-1)^2 (\alpha_z-2)} \right)^{n-1} \right. \notag \\
& \left. \qquad \qquad \quad - \left( \frac{\alpha_h^2}{(\alpha_z - 1)^2} \right)^{n-1}
\right]~\text{for~} \alpha_z > 2.
\end{align}

As in the previous model, we can show that either the expectation or the variance diverges or collapses as $n \rightarrow \infty$ for every possible choice of parameters, which means that they all give rise to a degenerate stationary distribution of the chain. Each subplot of Figure~\ref{fig-rate-hier} displays ten independent realizations of the chain, for a different set of parameters $(\alpha_z,\beta_z,\alpha_h,\beta_h)$. As we can see, the realizations of the chain either collapse to~$0$ or diverge.

\subsection{Chaining on the shape parameter} \label{sec-shape}

\subsubsection{Model} Let us consider a general Gamma Markov chain model with a chaining on the shape parameter:
\begin{equation}
h_{n}|h_{n-1} \sim \text{Gamma}(\alpha h_{n-1}, \beta).
\label{eq-gmc-shape-def}
\end{equation}
We have
\begin{equation}
\mathbb{E}(h_{n}|h_{n-1}) = \frac{\alpha}{\beta}h_{n-1}, \quad
\text{var}(h_{n}|h_{n-1}) = \frac{\alpha}{\beta^2}h_{n-1}.
\end{equation}

In contrast with the two models presented previously, this model cannot be rewritten as a multiplicative noise model. This model is therefore more intricate to interpret. It was introduced in \citet{acharya2015nonparametric} in the context of Poisson factorization. It is mainly motivated by a data augmentation trick that can be used when working with a Poisson likelihood, which enables a Gibbs sampling procedure. The authors set the value of $\alpha$ to 1 (although the same trick can be applied for any value of $\alpha$). This model is also a particular case of the dynamical model of \citet{schein2016poisson}. It has since been used in the context of topic modeling \citep{acharya2018dual}.

\subsubsection{Analysis} Using the law of total expectation and total variance, it can be shown that
\begin{equation}
\mathbb{E}(h_{n}) = h_1 \left( \frac{\alpha}{\beta} \right)^{n-1},~
\text{var}(h_{n}) = h_{1} \frac{1}{\beta} \left( \frac{\alpha}{\beta} \right)^{n-1} \sum_{i=0}^{n-2} \left( \frac{\alpha}{\beta} \right)^i.
\end{equation}

The discussion is hence driven by the value of $r = \alpha/\beta$.
\begin{itemize}
	\item If $r < 1$, mean and variance go to zero;
	\item If $r = 1$, mean is fixed but variance goes to infinity (linearly);
	\item If $r > 1$, mean and variance go to infinity.
\end{itemize}

This chain only exhibits degenerate stationary distributions. Each subplot of Figure~\ref{fig-shape} displays ten independent realizations of the chain, for a different set of parameters $(\alpha,\beta)$. As we can see, the realizations of the chain either collapse to 0, or diverge.

\subsection{Hierarchical chaining on the shape parameter} \label{sec-hier-shape}

\subsubsection{Model}

Let us consider the following Gamma Markov chain model
\begin{align}
z_{n}|h_{n-1} & \sim \text{Poisson}(\beta h_{n-1}), \label{eq-gmc-hs-d1} \\
h_{n}|z_{n} & \sim \text{Gamma}(\alpha + z_n, \beta). \label{eq-gmc-hs-d2}
\end{align}
This model is a particular case of the dynamical model firstly introduced in \citet{schein2019poisson}. It cannot be rewritten as a multiplicative noise model. Using the law of total expectation and total variance, we obtain
\begin{align}
\mathbb{E}(h_{n}|h_{n-1}) & = h_{n-1} + \frac{\alpha}{\beta}, \\
\text{var}(h_{n}|h_{n-1}) & = \frac{2}{\beta} h_{n-1} + \frac{\alpha}{\beta^2}.
\end{align}
The motivation behind the introduction of this model is once again computational: it leads to closed-form conditional distributions when considering a Poisson likelihood. As stated in \citet{schein2019poisson}, the auxiliary variable $z_n$ can actually be marginalized out, leading to the so-called randomized Gamma distribution of the first type (RG1), whose analytical expression makes use of modified Bessel functions.

The authors also consider the particular limit case $\alpha = 0$, which leads here to a chain which will only take value 0 after obtaining $z_n = 0$.

\subsubsection{Analysis}

Using the law of total expectation and total variance, it can be shown that
\begin{align}
\mathbb{E}(h_{n}) & = h_1 + (n-1)\frac{\alpha}{\beta}, \\
\text{var}(h_{n}) & = (n-1)\frac{2}{\beta} h_{1} + (n-1)^2 \frac{\alpha}{\beta^2}.
\end{align}
As such, for $\alpha, \beta > 0$, when $n \rightarrow + \infty$, both the expectation and variance of $h_n$ diverge, leading to a degenerate stationary distribution of the chain. Each subplot of Figure~\ref{fig-shape-hier} displays ten independent realizations of the chain, for a different set of parameters $(\alpha,\beta)$.

\subsection{BGAR(1)} \label{sec-bgar}

We now discuss the first order autoregressive Beta-Gamma process of \citet{lewis1989gamma}, a stochastic process which is marginally Gamma distributed. The authors referred to the process as ``BGAR(1)''. However, to the best of our knowledge, no extension to higher-order autoregressive processes exists in the time series literature. As such, from now on, we will simply refer to it as ``BGAR''.

\subsubsection{Model}

Consider $\alpha > 0$, $\beta > 0$, $\rho \in [0,1[$. The BGAR process is defined as:
\begin{align}
h_1 & \sim \text{Gamma}(\alpha,\beta), \label{eq-gmc-bgar-def1} \\
h_n & = b_n h_{n-1} + \epsilon_n \qquad \text{for~} n \geq 2 \label{eq-gmc-bgar-def2},
\end{align}
where $b_n$ and $\epsilon_n$ are i.i.d. random variables distributed as:
\begin{align}
b_n & \sim \text{Beta}(\alpha \rho, \alpha(1-\rho)), \label{eq-gmc-bgar-def3} \\
\epsilon_n & \sim \text{Gamma}(\alpha(1-\rho), \beta) \label{eq-gmc-bgar-def4}.
\end{align}
The sequence $(h_n)_{n \geq 1}$ is called the BGAR process. It is parametrized~by $\alpha$, $\beta$ and $\rho$. We emphasize that the distribution $p(h_n|h_{n-1})$ is not known in closed form. Only $p(h_n|h_{n-1}, b_n)$ is known; it is a shifted Gamma distribution. The generative model may therefore be rewritten as
\begin{align}
h_1 & \sim \text{Gamma}(\alpha,\beta), \label{eq-b1} \\
b_n & \sim \text{Beta}(\alpha \rho, \alpha(1-\rho)) \quad \text{for~}n \geq 2, \label{eq-b2} \\
h_n | b_n, h_{n-1} & \sim \text{Gamma}(\alpha(1-\rho), \beta, \text{loc} = b_n h_{n-1}) \label{eq-b3} \\
& \qquad \qquad \qquad \qquad \qquad~\text{for~}n \geq 2, \notag
\end{align}
where the distribution in Eq.~\eqref{eq-b3} is a shifted Gamma distribution with a location parameter ``loc''.

We have
\begin{align}
\mathbb{E}(h_n|h_{n-1}) & = \rho h_{n-1} + \frac{\alpha(1-\rho)}{\beta}, \label{eq-gmc-bgar-cond-moment1} \\
\text{var}(h_n|h_{n-1}) & = \frac{\rho(1-\rho)}{\alpha + 1}h^2_{n-1} + \frac{\alpha(1-\rho)}{\beta^2}. \label{eq-gmc-bgar-cond-moment2}
\end{align}

\subsubsection{Analysis}

To study the marginal distribution of the process, we recall the following lemma.

\begin{mylem}
If $X \sim \text{Beta}(a,b)$ and $Y \sim \text{Gamma}(a+b,c)$ are independent random variables, then $Z = XY$ is $\text{Gamma}(a,c)$ distributed.
\label{lemma-1}
\end{mylem}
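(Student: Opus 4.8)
The plan is to prove this classical result about the Beta–Gamma algebra by computing the distribution of $Z = XY$ directly, and the cleanest route is via the moment generating function (equivalently the Laplace transform), since products of independent variables are awkward to handle with convolutions but the target is a simple Gamma. First I would exploit the well-known \emph{Beta–Gamma splitting} idea in reverse: recall that if one writes a $\text{Gamma}(a+b,c)$ variable and multiplies by an independent $\text{Beta}(a,b)$ fraction, the result should carve out exactly the ``$a$-part'' of the total shape. Concretely, I would let $X \sim \text{Beta}(a,b)$ and $Y \sim \text{Gamma}(a+b,c)$ be independent, and aim to show $Z=XY \sim \text{Gamma}(a,c)$.

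The most transparent approach is to introduce the auxiliary product $W = (1-X)Y$ and compute the \emph{joint} density of $(Z,W)$ from that of $(X,Y)$ via the change of variables $X = Z/(Z+W)$, $Y = Z+W$. The Jacobian of this map is elementary (it equals $1/(Z+W)$ in absolute value), so substituting the product density
\begin{equation}
p_{X,Y}(x,y) \propto x^{a-1}(1-x)^{b-1}\, y^{a+b-1} e^{-cy}
\end{equation}
and collecting powers should yield a density in $(z,w)$ that \emph{factorizes} as a product of $z^{a-1}e^{-cz}$ and $w^{b-1}e^{-cw}$. Reading off the marginal of $Z$ then gives $\text{Gamma}(a,c)$ immediately (and, as a bonus, shows $Z$ and $W$ are independent $\text{Gamma}(a,c)$ and $\text{Gamma}(b,c)$ variables, which is the full Beta–Gamma splitting statement). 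Throughout I would use the paper's ``shape and rate'' convention, so that the $\text{Gamma}(a,c)$ density is proportional to $z^{a-1}e^{-cz}$.

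An alternative I would keep in reserve is a direct Laplace-transform computation: conditioning on $X=x$, the variable $xY$ is $\text{Gamma}(a+b, c/x)$, whose Laplace transform at $s$ is $(1 + sx/c)^{-(a+b)}$, and one would then integrate this against the $\text{Beta}(a,b)$ density of $x$ and try to recognize the result as $(1+s/c)^{-a}$. This path is conceptually short but the remaining Beta integral is an Euler-type integral whose evaluation is less immediate than the Jacobian bookkeeping, so I expect the change-of-variables route to be cleaner.

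The one genuine subtlety — and the step I would treat most carefully — is the normalization: matching constants requires the Beta–Gamma identity $B(a,b)\,\Gamma(a+b) = \Gamma(a)\Gamma(b)$, i.e. that $1/\big(B(a,b)\Gamma(a+b)\big) = 1/\big(\Gamma(a)\Gamma(b)\big)$, so that the constants in the factorized density combine to exactly the product of the two Gamma normalizers. This is the only place where something could silently go wrong, so I would verify the constant explicitly rather than absorbing it into ``$\propto$''. Everything else is routine substitution, and the independence hypothesis is used precisely to write $p_{X,Y}$ as a product in the first line.
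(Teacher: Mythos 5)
Your proposed proof is correct, but note that the paper itself does not prove Lemma~1 at all: it is explicitly ``recalled'' as a classical fact of the Beta--Gamma algebra and then used as a black box in the induction for Proposition~1. So there is no paper proof to diverge from; your argument supplies the standard textbook justification. The details check out: with $x = z/(z+w)$, $y = z+w$ the Jacobian is indeed $1/(z+w)$, the exponents of $(z+w)$ in the transformed density sum to $-(a-1)-(b-1)+(a+b-1)-1 = 0$ and hence cancel exactly, and the constant works out via $B(a,b)\,\Gamma(a+b) = \Gamma(a)\Gamma(b)$, giving
\begin{equation}
p_{Z,W}(z,w) = \frac{c^{a} z^{a-1} e^{-cz}}{\Gamma(a)} \cdot \frac{c^{b} w^{b-1} e^{-cw}}{\Gamma(b)},
\end{equation}
so $Z \sim \text{Gamma}(a,c)$ and, as you note, $Z$ and $W=(1-X)Y$ are independent. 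Your reserve route via the Laplace transform also works, but as you anticipate it requires recognizing the Euler integral $\int_0^1 x^{a-1}(1-x)^{b-1}(1+sx/c)^{-(a+b)}\,dx / B(a,b) = (1+s/c)^{-a}$ (an instance of ${}_2F_1(a, a+b; a+b; u) = (1-u)^{-a}$), so the change-of-variables argument is the cleaner choice. One small remark: the factorized joint density proves slightly more than the lemma asks, and that stronger statement (full Beta--Gamma splitting) is exactly what makes the BGAR construction consistent, so stating it is a genuine bonus rather than a digression.
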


\begin{myprp}
Let $(h_n)_{n \geq 1}$ be a BGAR process. Then $h_n$ is marginally $\text{Gamma}(\alpha,\beta)$ distributed.
\end{myprp}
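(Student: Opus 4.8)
The plan is to prove the claim by induction on $n$, leveraging Lemma~\ref{lemma-1} together with the additivity of independent Gamma random variables sharing a common rate parameter. The base case is immediate: by Eq.~\eqref{eq-gmc-bgar-def1}, $h_1 \sim \text{Gamma}(\alpha,\beta)$ by construction, so there is nothing to prove at $n=1$.

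For the inductive step, I would assume $h_{n-1} \sim \text{Gamma}(\alpha,\beta)$ and analyze $h_n = b_n h_{n-1} + \epsilon_n$ through its two summands. First I would treat the product $b_n h_{n-1}$. Writing $a = \alpha\rho$ and $b = \alpha(1-\rho)$, we have $b_n \sim \text{Beta}(a,b)$ and $h_{n-1} \sim \text{Gamma}(a+b,\beta)$, since $a+b = \alpha$. Because $b_n$ is drawn independently of the past and hence of $h_{n-1}$, Lemma~\ref{lemma-1} applies directly and gives $b_n h_{n-1} \sim \text{Gamma}(\alpha\rho, \beta)$.

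Next I would combine this product with the noise term. By Eq.~\eqref{eq-gmc-bgar-def4}, $\epsilon_n \sim \text{Gamma}(\alpha(1-\rho),\beta)$, and $\epsilon_n$ is independent of both $b_n$ and $h_{n-1}$, hence independent of the product $b_n h_{n-1}$. The sum of two independent Gamma variables with the same rate $\beta$ is again Gamma, with that common rate and with shape equal to the sum of the two shapes; this is a standard fact, verifiable via the moment generating function or a convolution argument. Applying it here yields $h_n \sim \text{Gamma}(\alpha\rho + \alpha(1-\rho),\, \beta) = \text{Gamma}(\alpha,\beta)$, which closes the induction.

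The computations themselves are routine once these distributional facts are in place; the only point requiring care is the bookkeeping of the independence assumptions. Invoking Lemma~\ref{lemma-1} requires $b_n$ independent of $h_{n-1}$, and forming the Gamma sum requires $\epsilon_n$ independent of $b_n h_{n-1}$. Both follow from the defining assumption that $(b_n)_{n\geq 2}$ and $(\epsilon_n)_{n\geq 2}$ are i.i.d. sequences, independent of one another and of the initial value $h_1$, so that each innovation at step $n$ is independent of the entire history up to $n-1$. I expect this verification of independence, rather than any analytical difficulty, to be the main thing to get right.
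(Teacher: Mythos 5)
Your proof is correct and takes essentially the same route as the paper: induction on $n$, with Lemma~\ref{lemma-1} giving $b_n h_{n-1} \sim \text{Gamma}(\alpha\rho,\beta)$ and closure of the Gamma family under sums of independent variables with a common rate completing the step. (Incidentally, the paper's own proof contains a typo --- it writes $\epsilon_{n+1}h_n + b_{n+1}$, swapping the roles of $b_{n+1}$ and $\epsilon_{n+1}$ --- so your version, with the independence bookkeeping made explicit, is the correctly stated form of the same argument.)
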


\begin{proof}
Follows by induction. Consider $n$ such that $h_n$ is $\text{Gamma}(\alpha,\beta)$ distributed. Then, $\epsilon_{n+1}h_n$ is $\text{Gamma}(\alpha \rho,\beta)$ distributed (Lemma~\ref{lemma-1}). Finally, $h_{n+1} = \epsilon_{n+1}h_n + b_{n+1}$ is $\text{Gamma}(\alpha,\beta)$ distributed (sum of independent Gamma random variables), which concludes the proof.
\end{proof}

Therefore the parameters $\alpha$ and $\beta$ control the marginal distribution. The parameter $\rho$ controls the correlation between successive values, as discussed in the following proposition.

\begin{myprp}
Let $(h_n)_{n \geq 1}$ be a BGAR process. Let $n$ and $r$ be two integers such that $r > 1$. We have $\text{corr}(h_n, h_{n+r}) = \rho^{r}$.
\label{prp-2}
\end{myprp}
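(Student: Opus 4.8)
The plan is to exploit the marginal stationarity just established: by the previous proposition every $h_m$ is marginally $\text{Gamma}(\alpha,\beta)$, so all the variances coincide, $\text{var}(h_n) = \text{var}(h_{n+r}) = \alpha/\beta^2$. This immediately collapses the correlation into a normalized autocovariance,
\begin{equation}
\text{corr}(h_n, h_{n+r}) = \frac{\text{cov}(h_n, h_{n+r})}{\text{var}(h_n)},
\end{equation}
and reduces the whole claim to computing the autocovariance $\text{cov}(h_n, h_{n+r})$.

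The engine of the proof is a one-step recursion for this autocovariance, read off directly from the defining relation $h_{n+r} = b_{n+r}h_{n+r-1} + \epsilon_{n+r}$. First I would observe that the innovations $b_{n+r}$ and $\epsilon_{n+r}$ are independent of the entire history generated before time $n+r$, hence of both $h_n$ and $h_{n+r-1}$. Expanding by bilinearity of the covariance, the $\epsilon_{n+r}$ contribution vanishes (it is independent of $h_n$), while in the remaining term the scalar $b_{n+r}$ factors out of the expectations, giving
\begin{equation}
\text{cov}(h_n, h_{n+r}) = \mathbb{E}(b_{n+r})\,\text{cov}(h_n, h_{n+r-1}).
\end{equation}
Since $b_{n+r} \sim \text{Beta}(\alpha\rho, \alpha(1-\rho))$ has mean $\mathbb{E}(b_{n+r}) = \alpha\rho/\alpha = \rho$, this is precisely $\text{cov}(h_n, h_{n+r}) = \rho\,\text{cov}(h_n, h_{n+r-1})$.

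Iterating this recursion $r$ times down to the base case $\text{cov}(h_n, h_n) = \text{var}(h_n)$ yields $\text{cov}(h_n, h_{n+r}) = \rho^r\,\text{var}(h_n)$, and dividing by $\text{var}(h_n)$ gives the claimed identity $\text{corr}(h_n, h_{n+r}) = \rho^r$. The one delicate point I anticipate is the independence bookkeeping in the recursion step: one must verify that the fresh innovations $b_{n+r}, \epsilon_{n+r}$ are independent not only of $h_{n+r-1}$ but also of $h_n$, so that $b_{n+r}$ can legitimately be pulled out of $\mathbb{E}(h_n\,b_{n+r}\,h_{n+r-1})$ as a product of expectations. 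This holds because $h_n$ and $h_{n+r-1}$ are both measurable functions of $(h_1, b_2, \ldots, b_{n+r-1}, \epsilon_2, \ldots, \epsilon_{n+r-1})$, which is independent of $(b_{n+r}, \epsilon_{n+r})$ by the i.i.d.\ assumption; everything else is routine.
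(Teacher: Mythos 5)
Your proof is correct, and it in fact proves more than the paper writes down. The paper's own argument (Appendix~B) treats only the lag-one case: it expands $\mathbb{E}(h_n h_{n+1})$ using $h_{n+1} = b_{n+1}h_n + \epsilon_{n+1}$ and substitutes the explicit Gamma and Beta moments ($\mathbb{E}(h_n^2) = \alpha(\alpha+1)/\beta^2$, $\mathbb{E}(b_{n+1}) = \rho$, $\mathbb{E}(\epsilon_{n+1}) = \alpha(1-\rho)/\beta$), obtaining $\text{corr}(h_n,h_{n+1}) = \rho$ by direct moment computation; the extension to a general lag is left entirely implicit (the proof body just says ``See Appendix~B for $r=1$''). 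You instead establish the one-step covariance recursion $\text{cov}(h_n,h_{n+r}) = \mathbb{E}(b_{n+r})\,\text{cov}(h_n,h_{n+r-1}) = \rho\,\text{cov}(h_n,h_{n+r-1})$ and iterate it down to $\text{cov}(h_n,h_n)=\text{var}(h_n)$, which yields the claim for every lag $r$ in one stroke. Your route also needs fewer explicit moments: only the Beta mean and the equality of the marginal variances guaranteed by the previous proposition (the numerical value $\alpha/\beta^2$ actually cancels), whereas the paper's computation requires the second moment of the Gamma marginal. The one genuinely delicate point---that $(b_{n+r},\epsilon_{n+r})$ must be independent of the \emph{pair} $(h_n,h_{n+r-1})$ jointly, not merely of $h_{n+r-1}$---you identify and justify correctly via measurability with respect to the earlier innovations; note that for $r=1$ this subtlety disappears (the pair collapses to $h_n$ alone), which is partly why the paper's lag-one computation looks simpler. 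In short, the paper's approach buys a concrete, self-contained numerical verification; yours buys the actual general-$r$ statement of the proposition.
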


\begin{proof}
See Appendix~\ref{app-b} for $r=1$.
\end{proof}

Proposition~\ref{prp-2} implies that the BGAR(1) process admits a (second order) AR(1) representation. Two limit cases of BGAR can be exhibited:
\begin{itemize}
	\item When $\rho = 0$, the $h_n$ are i.i.d. random variables;
	\item When $\rho \rightarrow 1$, the process is not random anymore, and $h_n = h_1$ for all $n$ (note that $\rho = 1$ is not an admissible value).
\end{itemize}

Finally, from Eq.~\eqref{eq-gmc-bgar-cond-moment1}, we have
\begin{equation}
\bigg( \mathbb{E}(h_n|h_{n-1}) > h_{n-1} \bigg) \Leftrightarrow \bigg( h_{n-1} < \frac{\alpha}{\beta} \bigg).
\end{equation}
If $h_{n-1}$ is below the mean of the marginal distribution $\mathbb{E}(h_n)=\frac{\alpha}{\beta}$, then $h_n$ will be in expectation above $h_{n-1}$, and vice-versa.

Note that BGAR is not the only Markovian process with a marginal Gamma distribution considered in the literature. We mention the GAR(1) process (first-order autoregressive Gamma process) of \citet{gaver1980first}, which is also marginally Gamma distributed. However, this particular process is piecewise deterministic, and its parameters are ``coupled'': the parameters of the marginal distribution also have an influence on other properties of the model. As such, it is less suited to our problem, and will not be considered here.

Figure~\ref{fig-bgar} displays three realizations of the BGAR process, with parameters fixed to $\alpha = 2$ and $\beta = 1$, and a different parameter $\rho$ in each subplot. The mean of the marginal distribution is displayed in red. When $\rho = 0.5$, the correlation is weak, and no particular structure is observed. However, as $\rho$ goes to 1, the correlation becomes stronger, and we typically observe piecewise constant trajectories.

\begin{figure}[t]
    \centering
    \includegraphics[width = 14cm]{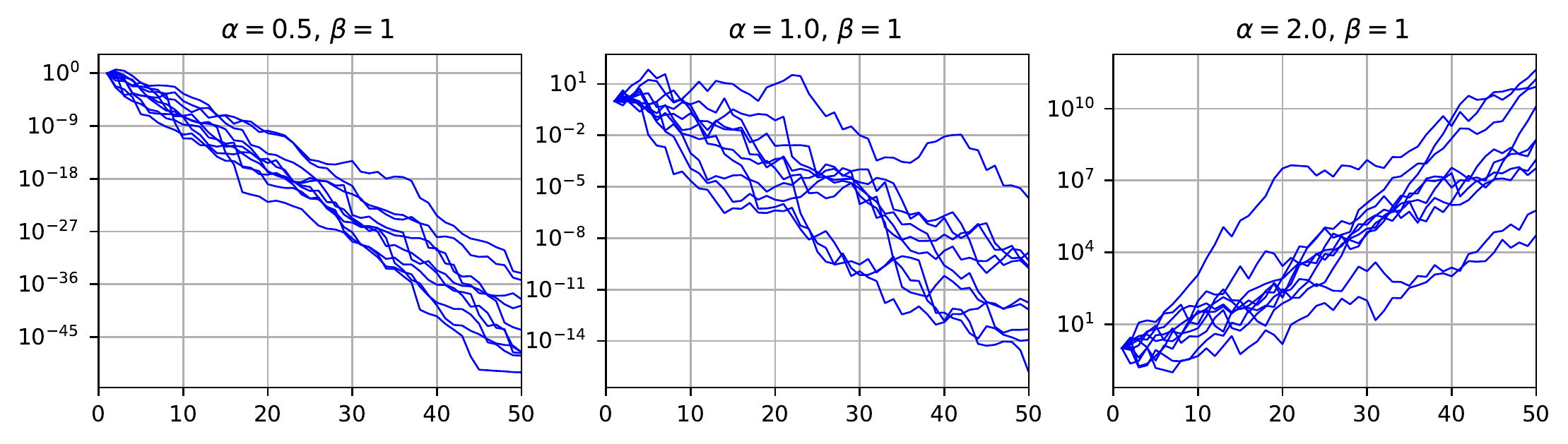}
    \caption{Realizations of the Markov chain defined in Eq.~\eqref{eq-gmc-rate-def}. The initial value $h_1$ is set to 1, and chains were simulated until $n=50$. Each subplot contains ten independent realizations, with the value of the parameters $(\alpha,\beta)$ given at the top of the subplot. $\log_{10}(h_n)$ is displayed.}
    \label{fig-rate}
\end{figure}

\begin{figure}[t]
    \centering
    \includegraphics[width = 14cm]{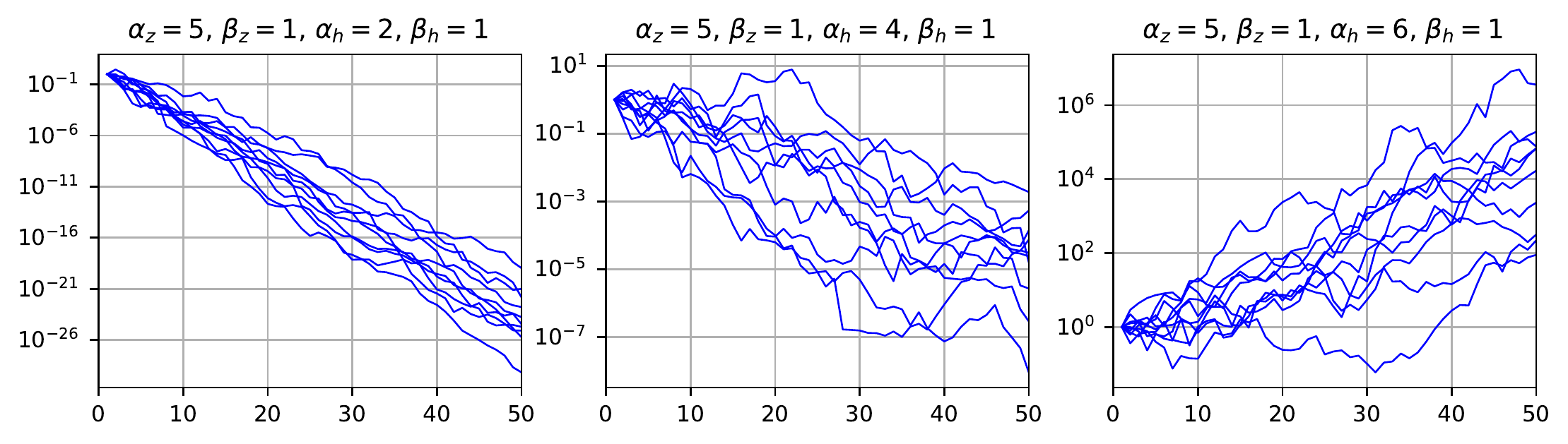}
    \caption{Realizations of the Markov chain defined in Eq.~\eqref{eq-gmc-cd-def1}-\eqref{eq-gmc-cd-def2}. The initial value $h_1$ is set to 1, and chains were simulated until $n=50$. Each subplot contains ten independent realizations, with the value of the parameters $(\alpha_z,\beta_z,\alpha_h,\beta_h)$ given at the top of the subplot. $\log_{10}(h_n)$ is displayed.}
    \label{fig-rate-hier}
\end{figure}

\begin{figure}[t]
    \centering
    \includegraphics[width = 14cm]{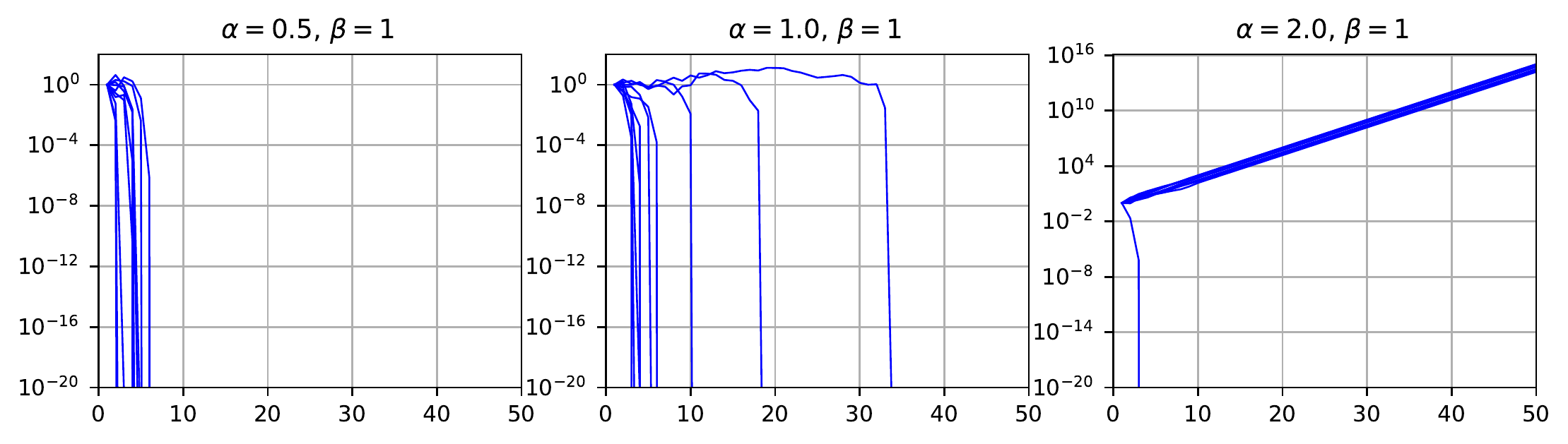}
    \caption{Realizations of the Markov chain defined in Eq.~\eqref{eq-gmc-shape-def}. The initial value $h_1$ is set to 1, and chains were simulated until $n=50$. Each subplot contains ten independent realizations, with the value of the parameters $(\alpha,\beta)$ given at the top of the subplot. $\log_{10}(h_n)$ is displayed.}
    \label{fig-shape}
\end{figure}

\begin{figure}[t]
    \centering
    \includegraphics[width = 14cm]{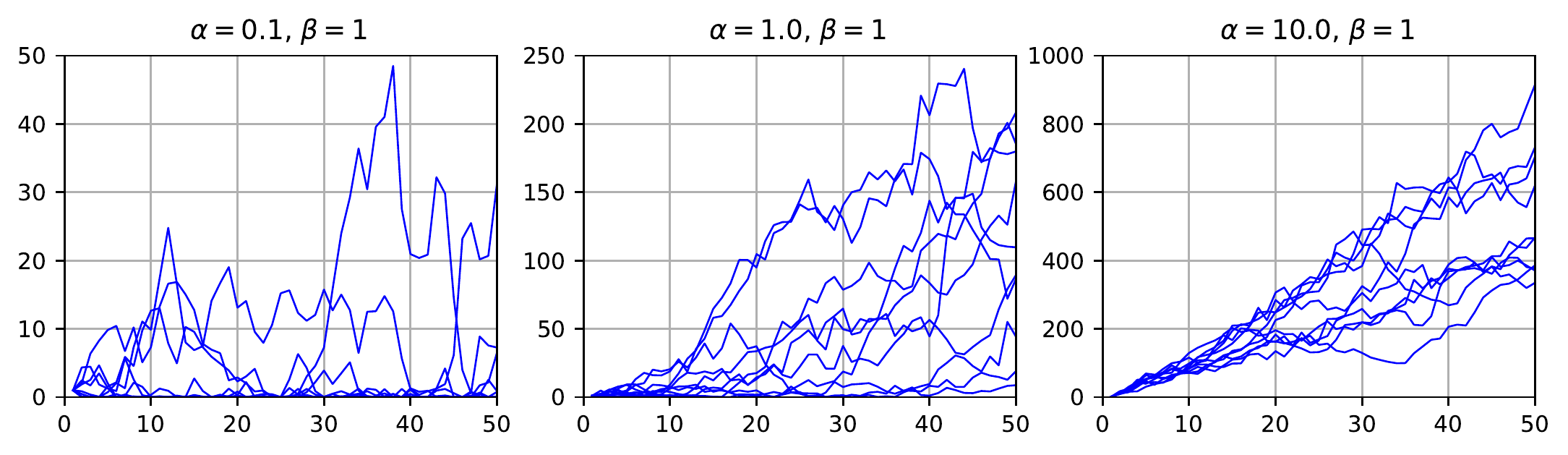}
    \caption{Realizations of the Markov chain defined in Eq.~\eqref{eq-gmc-hs-d1}-\eqref{eq-gmc-hs-d2}. The initial value $h_1$ is set to 1, and chains were simulated until $n=50$. Each subplot contains ten independent realizations, with the value of the parameters $(\alpha,\beta)$ given at the top of the subplot. $\log_{10}(h_n)$ is displayed.}
    \label{fig-shape-hier}
\end{figure}

\begin{figure}[t]
    \centering
    \includegraphics[width = 14cm]{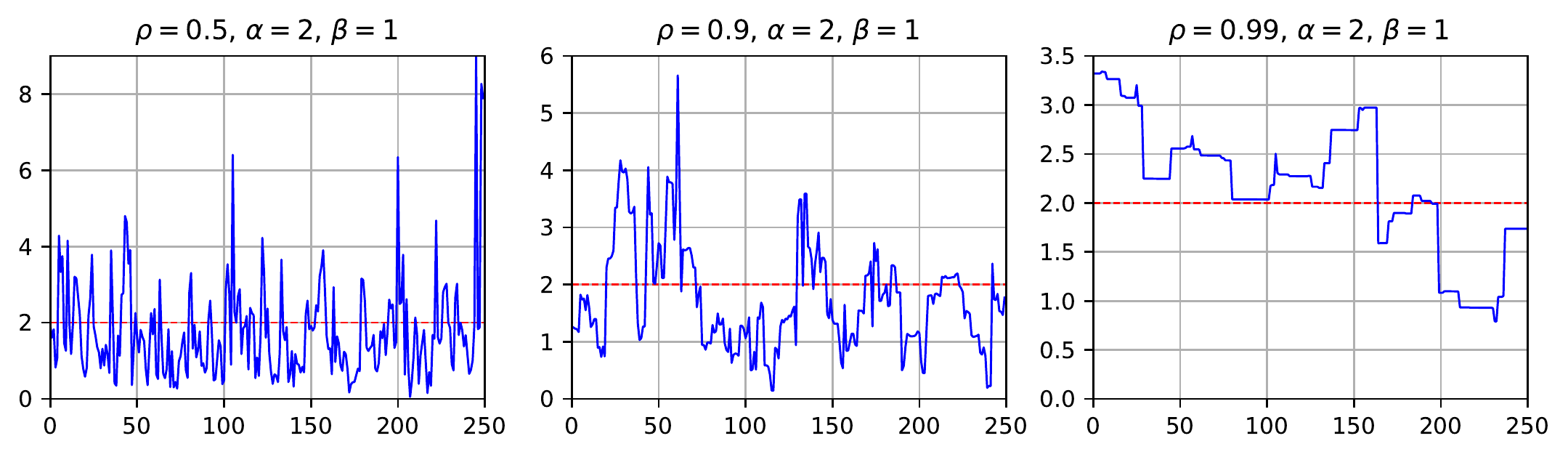}
    \caption{Three realizations of the BGAR(1) process, with parameters fixed to $\alpha = 2$ and $\beta = 1$, and a different parameter $\rho$ in each subplot. The mean of the process is displayed by a dashed red line.}
    \label{fig-bgar}
\end{figure}

\clearpage

\section{MAP estimation in temporal NMF models} \label{sec-map}

We now turn to the problem of maximum a posteriori (MAP) estimation in temporal NMF models. More precisely, we assume a Poisson likelihood, that is
\begin{equation}
v_{fn} \sim \text{Poisson}([\mathbf{WH}]_{fn}),
\end{equation}
and we also assume that $\mathbf{W}$ is a deterministic variable. The variables $\mathbf{V}$ and $\mathbf{H}$ then define a hidden Markov model, as displayed on Figure~\ref{figure-hmm}.

We consider four different models corresponding to the temporal structures on $\mathbf{H}$ presented in subsections \ref{sec-rate}, \ref{sec-hier-rate}, \ref{sec-shape}, and \ref{sec-bgar}. Only the temporal structure presented in~\ref{sec-hier-shape} is left out. Indeed, deriving a MAP algorithm in this model using the auxiliary variables $\mathbf{Z}$ (similar to the one of Section~\ref{sec-alg-map-hier}) would involve integer programming. This leads to technical developments which are out-of-scope of our current study.

\begin{figure}[h]
\centering
	\begin{tikzpicture}[scale = 1.25]
	\draw (0,0) circle (0.5);
	\draw (1.5,0) circle (0.5);
	\draw (3,0) circle (0.5);
	\draw [fill=blue!25](0,-1.5) circle (0.5);
	\draw [fill=blue!25](1.5,-1.5) circle (0.5);
	\draw [fill=blue!25](3,-1.5) circle (0.5);
	\draw [->] (0.5,0) -- (1,0);
	\draw [->] (2,0) -- (2.5,0);
	\draw [<-] (0, -1) -- (0, -0.5);
	\draw [<-] (1.5, -1) -- (1.5, -0.5);
	\draw [<-] (3, -1) -- (3, -0.5);
	\draw [dashed] (3.5, 0) -- (3.95, 0);
	\draw [dashed] (-0.5, 0) -- (-0.95, 0);
	\draw (0,0) node{{$\mathbf{h}_{n-1}$}};
	\draw (1.5,0) node{{$\mathbf{h}_{n}$}};
	\draw (3,0) node{{$\mathbf{h}_{n+1}$}};
	\draw (0,-1.5) node{{$\mathbf{v}_{n-1}$}};
	\draw (1.5,-1.5) node{{$\mathbf{v}_{n}$}};
	\draw (3,-1.5) node{{$\mathbf{v}_{n+1}$}};

	\draw (1.5,-2.5) node {$\bullet$};
	\draw (1.5,-2.8) node {{$\mathbf{W}$}};
	\draw [->] (1.5,-2.5) -- (1.5,-2);
	\draw [->] (1.5,-2.5) -- (0,-2);
	\draw [->] (1.5,-2.5) -- (3,-2);
	\draw [dashed] (1.5, -2.5) -- (-0.75, -2.125);
	\draw [dashed] (1.5, -2.5) -- (3.75, -2.125);
	\end{tikzpicture}
	\caption{Hidden Markov model arising in temporal NMF models. $\mathbf{v}_n$ is of dimension $F$, while $\mathbf{h}_n$ is of dimension $K$. Observed variables are in blue.}
	\label{figure-hmm}
\end{figure}
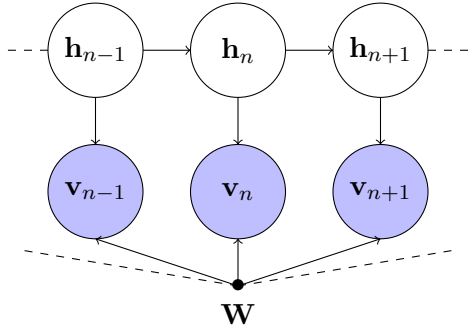

Generally speaking, joint MAP estimation in such models amounts to minimizing the following criterion
\begin{align}
C(\mathbf{W},\mathbf{H}) & = -\log p(\mathbf{V},\mathbf{H};\mathbf{W}) \label{eq-ccc} \\
& = -\log p(\mathbf{V}|\mathbf{H};\mathbf{W}) - \sum_k \left[ \log p(h_{k1}) + \sum_{n \geq 2} \log p(h_{kn}|h_{k(n-1)}) \right],
\end{align}
that is to say that the factors $\mathbf{W}$ and $\mathbf{H}$ are going to be estimated. Both shape hyperparameters ($\alpha_k$ or $\rho_k$) and scale hyperparameters ($\beta_k$) will be treated as fixed and selected using a validation set. However, note that deriving the maximum likelihood estimate of $\beta_k$ is feasible in closed form for all the models presented below. Unfortunately, estimating $\beta_k$ this way led to overly flat priors in our experience (likely due to the MAP estimation setting).

The optimization of the function $C$ is carried out with a block coordinate descent scheme over the variables $\mathbf{W}$ and~$\mathbf{H}$. We resort to a majorization-minimization (MM) scheme, which consists in iteratively majorizing the function $C$ (by a so-called auxiliary function, tight for some $\tilde{\mathbf{W}}$ or $\tilde{\mathbf{H}}$), and minimizing this auxiliary function instead. We refer the reader to \citet{hunter2004tutorial} for a detailed tutorial. Under this scheme, the function $C$ is non-increasing. As it turns out, only the Poisson likelihood term $-\log p(\mathbf{V}|\mathbf{H};\mathbf{W})$ needs to be majorized. This is a well-studied issue in the NMF literature. As stated in \citet{lee2000algorithms,fevotte2011algorithms}, the function
\begin{equation}
G_1(\mathbf{H};\tilde{\mathbf{H}}) = - \sum_{k,n} {p}_{kn} \log (h_{kn}) + \sum_{k,n} q_{k} h_{kn},
\label{eq-g1}
\end{equation}
with the notations
\begin{equation}
p_{kn} = \tilde{h}_{kn} \sum_f w_{fk} \frac{v_{fn}}{[\mathbf{W\tilde{H}}]_{fn}}, \quad q_k = \sum_f w_{fk},
\end{equation}
is a tight auxiliary function of $- \log p(\mathbf{V}|\mathbf{H};\mathbf{W})$ at $\mathbf{H} = \tilde{\mathbf{H}}$. Similarly the function
\begin{equation}
G_2(\mathbf{W};\tilde{\mathbf{W}}) = - \sum_{f,k} p'_{fk} \log (w_{fk}) + \sum_{f,k} q'_{k} w_{fk},
\label{eq-g2}
\end{equation}
with the notations
\begin{equation}
p'_{fk} = \tilde{w}_{fk} \sum_n h_{kn} \frac{v_{fn}}{[\mathbf{\tilde{W}H}]_{fn}}, \quad q'_k = \sum_n h_{kn},
\end{equation}
is a tight auxiliary function of $- \log p(\mathbf{V}|\mathbf{H};\mathbf{W})$ at $\mathbf{W} = \tilde{\mathbf{W}}$.

\subsection{Minimization w.r.t. \textbf{W}}

The optimization w.r.t. $\mathbf{W}$ is common to all algorithms, and amounts to minimizing $G_2(\mathbf{W};\tilde{\mathbf{W}})$ only. The scale of~$\mathbf{W}$ must be however be fixed in order to prevent potential degenerate solutions such that $\mathbf{W} \rightarrow + \infty$ and $\mathbf{H} \rightarrow 0$. Indeed, consider $\mathbf{W}^{\star}$ and $\mathbf{H}^{\star}$ minimizers of Eq.~\eqref{eq-ccc}, and let $\boldsymbol{\Lambda}$ be a diagonal matrix with non-negative entries. Then
\begin{align}
C(\mathbf{W}^{\star} \boldsymbol{\Lambda}^{-1}, \boldsymbol{\Lambda} \mathbf{H}^{\star})
& = - \log p(\mathbf{V}|\boldsymbol{\Lambda} \mathbf{H}^{\star}; \mathbf{W}^{\star} \boldsymbol{\Lambda}^{-1}) - \log p(\boldsymbol{\Lambda} \mathbf{H}^{\star}) \\
& = - \log p(\mathbf{V}|\mathbf{H}^{\star}; \mathbf{W}^{\star}) - \log p(\boldsymbol{\Lambda} \mathbf{H}^{\star}),
\end{align}
and depending on the choice of the prior distribution $p(\mathbf{H})$, we may obtain $C(\mathbf{W}^{\star} \boldsymbol{\Lambda}^{-1}, \boldsymbol{\Lambda} \mathbf{H}^{\star}) < C(\mathbf{W}^{\star}, \mathbf{H}^{\star})$, i.e., a contradiction. Therefore, in the following we impose that $||\mathbf{w}_k||_1 = 1$.

The constrained optimization is performed with the following update rule
\begin{equation}
w_{fk} = \frac{p'_{fk}}{\sum_f p'_{fk}}, \label{eq-upw}
\end{equation}
see Appendix~\ref{app-c} for the proof.

The following subsections detail the optimization w.r.t. $\mathbf{H}$ (and other variables when necessary) in the four considered models, which amounts to the minimization of $G_1(\mathbf{H};\tilde{\mathbf{H}}) - \log p(\mathbf{H})$.

\subsection{Chaining on the rate parameter} \label{sec-alg-map-rate}

The transition distribution $p(h_{kn}|h_{k(n-1)})$ is given by Eq.~\eqref{eq-gmc-rate-def}. The optimization w.r.t. $h_{kn}$ amounts to solving an order-2 polynomial equation on $\mathbb{R}_+$
\begin{equation}
a_{2,{kn}} h_{kn}^2 + a_{1,{kn}} h_{kn} + a_{0,{kn}} = 0.
\label{eq-poly-1}
\end{equation}
As it turns out, there is always exactly one non-negative root. The coefficients of the polynomial equation are given in Table~\ref{table-1}. This bears resemblance with the methodology described in \citet{fevotte2009nonnegative}, where the authors aimed at retrieving MAP estimates with a EM-like algorithm (with an exponential likelihood).

\begin{table}[t]
	\caption{Coefficients of the polynomial equation Eq.~\eqref{eq-poly-1}}
	\label{table-1}
	\centering
	\begin{tabular}{cccc}
	\toprule
	$n$ & $a_{2,{kn}}$ & $a_{1,{kn}}$ & $a_{0,{kn}}$ \\
	\midrule
	$1$ & $q_{k}$ & $\alpha_k - p_{1k}$ & $-\beta_k h_{k2}$ \\
	\\
	$2,\dotsc,N-1$ & $q_k + \frac{\beta_k}{h_{k(n-1)}}$ & $1 - p_{kn}$ & $- \beta_k h_{k(n+1)}$ \\
	\\
	$N$ & 0 & $q_k$ + $\frac{\beta}{h_{k(N-1)}}$ & $1- \alpha_k - p_N$ \\
	\toprule
	\end{tabular}
\end{table}

\subsection{Hierarchical chaining on the rate parameter} \label{sec-alg-map-hier}

In this case, we resort to using the auxiliary variables $\mathbf{Z}$, which results in the the slightly more involved following criterion
\begin{align}
C(\mathbf{W},\mathbf{H},\mathbf{Z}) = & -\log p(\mathbf{V}|\mathbf{H};\mathbf{W}) \\
& - \sum_k \left[ \log p(h_{k1}) + \sum_{n \geq 2} \left( \log p(z_{kn}|h_{k(n-1)}) + \log p(h_{kn}|z_{kn}) \right) \right] \notag.
\end{align}
We recall that $p(z_{kn}|h_{k(n-1)})$ and $p(h_{kn}|z_{kn})$ are given by Eq.~\eqref{eq-gmc-cd-def1} and Eq.~\eqref{eq-gmc-cd-def2}, respectively. Note that \citet{cemgil2007conjugate} proposed a Gibbs sampler and variational inference, and as such the development of the MAP algorithm is novel.

Imposing $\alpha_{h,k} \geq 1$, we obtain the following update for $z_{kn}$
\begin{equation}
z_{kn} = \frac{\alpha_{z,k} + \alpha_{h,k} - 1}{\beta_{z,k} h_{k(n-1)} + \beta_{h,k} h_{kn}},
\end{equation}
and the following updates for $h_{kn}$
\begin{align}
h_{k1} & = \frac{p_{k1} + \alpha_{z,k}}{q_k + \beta_{z,k} z_{k2}}, \\
h_{kn} & = \frac{p_{kn} + \alpha_{h,k} + \alpha_{z,k} - 1}{q_k + \beta_{h,k} z_{kn} + \beta_{z,k} z_{k(n+1)}}, n \in \{ 2,\dotsc, N-1 \}, \\
h_{kN} & = \frac{p_{kN} + \alpha_{h,k} - 1}{q_k + \beta_{h,k} z_{kN}}.
\end{align}

\subsection{Chaining on the shape parameter} \label{sec-alg-map-shape}

The transition distribution $p(h_{kn}|h_{k(n-1)})$ is given by Eq.~\eqref{eq-gmc-shape-def}. The optimization w.r.t. $h_{kn}$ amounts to solving the following equations on $\mathbb{R}_+$
\begin{equation}
-p_{k1} + (q_k - \alpha_k \log(\beta_k h_{k2}) + \alpha_k \Psi(\alpha h_{k1}) )h_{k1} = 0,
\end{equation}
\begin{align}
& (1-\alpha_k h_{k(n-1)}-p_{kn}) + (q_k + \beta_k - \alpha_k \log(\beta_k h_{k(n+1)})) h_{kn} + \alpha_k \Psi(\alpha_k h_{kn}) h_{kn} = 0,  \\
& \text{for } n \in \{ 2,\dotsc, N-1 \} \notag,
\end{align}
where $\Psi$ denotes the digamma function. Solving such equations can be done numerically with Newton's method. Finally the update for $h_{kN}$ is given by
\begin{equation}
h_{kN} = \frac{p_{kn} + \alpha_k h_{k(N-1)} -1}{q_k + \beta_k}.
\end{equation}

Note that a Gibbs sampling procedure is proposed in \citet{acharya2015nonparametric,schein2016poisson}, and as such the development of the MAP algorithm is novel.

\subsection{BGAR(1)} \label{sec-alg-map-bgar}

In this case, since the transition distribution $p(h_{kn}|h_{k(n-1)})$ is not known in closed form, we resort to optimizing the slightly more involved following criterion
\begin{align}
C(\mathbf{W},\mathbf{H},\mathbf{B}) = & -\log p(\mathbf{V}|\mathbf{H};\mathbf{W}) \label{eq-map-bgar} \\
& - \sum_k \left[ \log p(h_{k1}) + \sum_{n \geq 2} \left( \log p(h_{kn}|h_{k(n-1)},b_{kn}) + \log p(b_{kn}) \right) \right] \notag.
\end{align}

In the following, we will use the notations $\gamma_k = \alpha_k(1-\rho_k)$ and $\eta_k = \alpha_k \rho_k$.

\subsubsection{Constraints}

By construction, the variables $h_{kn}$ and $b_{kn}$ must lie in a specific interval given the values of all the other variables. Indeed, as $h_{kn} = b_{kn} h_{k(n-1)} + \epsilon_{kn}$ (see Eq.~\eqref{eq-gmc-bgar-def2}), where $\epsilon_{kn}$ is a non-negative random variable, we obtain ${h_{kn} \geq b_{kn} h_{k(n-1)}}$,  $b_{kn} \leq \frac{h_{kn}}{h_{k(n-1)}}$, and $h_{kn} \leq \frac{h_{k(n+1)}}{b_{k(n+1)}}$.

This leads to the following constraints
\begin{align}
0 & \leq  h_{k1} \leq \frac{h_{k2}}{b_{k2}}, \\
b_{kn} h_{k(n-1)} & \leq h_{kn} \leq \frac{h_{k(n+1)}}{b_{k(n+1)}} \quad n \in \{ 2,\dotsc, N-1 \}, \\
b_{kN} h_{k(N-1)} & \leq h_{kN},
\end{align}
and
\begin{equation}
0 \leq b_{kn} \leq \min \left( 1, \frac{h_{kn}}{h_{k(n-1)}} \right).
\end{equation}

We therefore introduce the notations
\begin{align}
c_{kn} = b_{kn} h_{k(n-1)}, \quad d_{kn} = \frac{h_{k(n+1)}}{b_{k(n+1)}}, \quad x_{kn} = \frac{h_{kn}}{h_{k(n-1)}},
\end{align}
as these quantities arise naturally in our derivations.

\subsubsection{Minimization w.r.t. $h_{kn}$}

\begin{table*}[t]
	\caption{Coefficients of the polynomial equation Eq.~\eqref{eq-poly-2}. Def. int. = Definition interval.}
	\label{table-2}
	\centering
	\resizebox{\textwidth}{!}{%
	\begin{tabular}{ccp{3.2cm}p{3.2cm}p{3.2cm}p{3.2cm}}
	\toprule
	$n$ & Def. interval & $a_{3,{kn}}$ & $a_{2,{kn}}$ & $a_{1,{kn}}$ & $a_{0,{kn}}$ \\
	\midrule
	$1$ & $[0, d_{k1}]$ & 0 & $-(q_k + \beta_k(1-b_{k2}))$ & $-(1-\alpha_k - p_{k1}) + (q_k + \beta_k(1-b_{k2})) d_{k1} - (1-\gamma_k)$ & $(1-\alpha_k - p_{k1}) d_{k1}$ \\
	\\
	$2,\dotsc,N-1$ & $[c_{kn}, d_{kn}]$ & $-(q_{k} + \beta_k(1-b_{k(n+1)}))$ & $p_{kn} - 2(1 - \gamma_k) + (q_{k} + \beta_k(1-b_{k(n+1)})\left( c_{kn} + d_{kn} \right)$ & $-p_{kn}\left( c_{kn} + d_{kn} \right) + (1- \gamma_k)\left( c_{kn} + d_{kn} \right) - (q_{k} + \beta_k(1-b_{k(n+1)})) c_{kn} d_{kn}$ & $p_{kn} c_{kn} d_{kn}$ \\
	\\
	$N$ & $[c_{kN},+\infty[$ & 0 & $q_k + \beta_k$ & $-p_{kN} - c_{kN}(q_k + \beta_k) + (1-\gamma_k)$ & $c_{kN} p_{kN}$ \\
	\toprule
	\end{tabular}
    }    	
\end{table*}

The optimization of Eq.~\eqref{eq-map-bgar} w.r.t. $h_{kn}$ may give rise to intractable problems, due to the logarithmic terms in the objective function. To alleviate this issue, we propose to control the limit values of the auxiliary function, by restricting ourselves to certain values of the hyperparameters. In particular, choosing $(1-\gamma_k) < 0$ ensures the existence of at least one minimizer.

For all $n$, the optimization w.r.t. $h_{kn}$ amounts to solving an order-3 polynomial equation
\begin{equation}
a_{3,{kn}} h_{kn}^3 + a_{2,{kn}} h_{kn}^2 + a_{1,{kn}} h_{kn} + a_{0,{kn}} = 0.
\label{eq-poly-2}
\end{equation}
The coefficients of the equation and definition intervals are given in Table~\ref{table-2}. If several roots belong to the definition interval, we simply choose the root which gives the lowest objective value.

\subsubsection{Minimization w.r.t. $b_{kn}$}

Similarly, logarithmic terms of the objective function may give rise to degenerate solutions. Using the same reasoning, we choose to impose ${(1-\gamma_k) < 0}$ and ${(1-\eta_k) < 0}$ to ensure the existence of at least one minimizer. 

The minimization of the auxiliary function w.r.t. $b_{kn}$ amounts to solving the following order 3 polynomial over the interval $[0, \min(1, x_{kn})]$
\begin{equation}
a_{3,kn} b_{kn}^3 + a_{2,kn} b_{kn}^2 + a_{1,kn} b_{kn} + a_{0,kn} d_{kn} = 0,
\end{equation}
where
\begin{align}
a_{3,kn} & = -\beta_k h_{k(n-1)}, \\
a_{2,kn} & = 2(1-\gamma_k) + (1-\eta_k) + \beta_k h_{k(n-1)}(x_{kn}+1), \\
a_{1,kn} & = -(1-\gamma_k)(x_{kn}+1) - (1-\eta_k)(x_{kn} + 1)  \\
& \quad - \beta_k h_{k(n-1)} x_{kn},\notag \\
a_{0,kn} & = (1-\eta_k) x_{kn}.
\end{align}

\subsubsection{Admissible values of hyperparameters}

To recap the discussion on admissible values of hyperparameters, to ensure the existence of minimizers of the auxiliary function, we have restricted ourselves to
\begin{equation}
\left\{
\begin{array}{l}
\alpha_k(1-\rho_k) > 1, \\
\alpha_k \rho_k > 1.
\end{array}
\right.
\end{equation}
This set is graphically displayed on Figure~\ref{fig-admv}. As we can see, choosing the value of $\rho_k$ to be close to one (to ensure correlation) leads to high values of $\alpha_k$.

\begin{figure}
	\centering
	\includegraphics[height = 5.5cm]{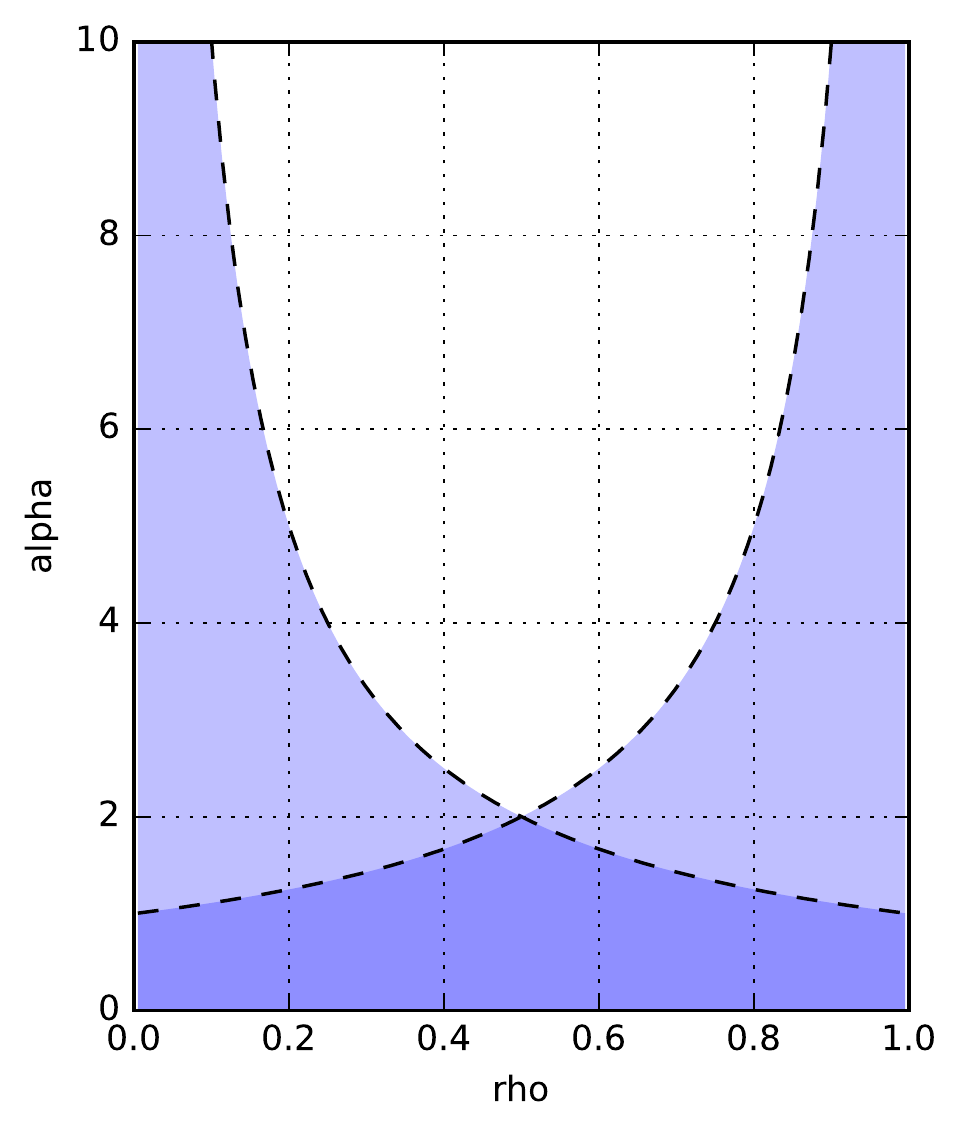}
	\caption{Admissible values of the hyperparameters in the MAP algorithm presented in Section~\ref{sec-alg-map-bgar}. Admissible values are in white.}
	\label{fig-admv}
\end{figure}

\section{Experimental work} \label{sec-exp}

We now compare the performance of all considered temporal NMF models on a prediction task on three real datasets. This task will consist in hiding random columns of the considered datasets and estimating those missing values. We will also include the performance of a naive baseline, which we detail in the following subsection. Adapting the MAP algorithms presented in Section~\ref{sec-study} in a setting with a mask of missing values only consist in a slight modification, presented in Appendix~\ref{app-d}. Python code is available online\footnote{\url{https://github.com/lfilstro/TemporalNMF}}.

\subsection{Experimental protocol}

For each considered dataset, the experimental protocol is as follows.

First of all, a value of the factorization rank $K$ (which will be used for all considered methods) must be selected. To do so, we apply the standard KL-NMF algorithm \citep{lee2000algorithms,fevotte2011algorithms} on 10 random training sets, which consist of $80 \%$ of the original data, with a pre-defined grid of values for $K$. We then select the value of $K$ which yields the lowest generalized Kullback-Leibler error (KLE) (see definition below) on the remaining $20 \%$ of the data, on average.

For the prediction experiment itself, we create 5 random splits of the data matrix, where $80 \%$ corresponds to the training set, $10 \%$ to the validation set, and the remaining $10 \%$ to the test set. To do so, we randomly select non-adjacent columns of the data matrix (excluding the first one and always including the last one), half of which will make up the validation set and the other half the test set (the last column is always included in the test set). We also consider 5 different random initializations.

Then, for each split-initialization pair, all the algorithms are run from this initialization point on the training set until convergence (the algorithms are stopped when the relative decrease of the objection function falls under $10^{-5}$). For each method, a grid of hyperparameters is considered, and their selection is based on the lowest KLE on the validation set. Details of the grids used for each method can be found in Appendix~\ref{app-e}. The predictive performance of each method is then computed on the test set by comparing the original value~$v_{fn}$ and its associated estimate $\hat{v}_{fn} = [\mathbf{WH}]_{fn}$ with the following metric. Denoting by $\mathcal{T}$ the test set, we compute the generalized Kullback-Leibler error (KLE), which is defined as
\begin{equation}
\text{KLE} = \sum_{(f,n) \in \mathcal{T}} \left[ v_{fn} \log \left( \frac{v_{fn}}{\hat{v}_{fn}} \right) - v_{fn} + \hat{v}_{fn} \right].
\end{equation}

Finally, we construct a baseline based on the Gamma-Poisson (GaP) model of \citet{canny2004gap}. The GaP model is based on independent Gamma priors on $\mathbf{H}$, i.e., a non-temporal prior. However, it is unable to estimate columns $\mathbf{h}_n$ associated with missing columns $\mathbf{v}_n$. We propose to set $\mathbf{h}_n = \frac{1}{2}(\mathbf{h}_{n-1} + \mathbf{h}_{n+1})$ and $\mathbf{h}_N = \mathbf{h}_{N-1}$ for these columns. MAP estimation in the GaP model is described in \citet{dikmen2012maximum} and is recalled in Appendix~\ref{app-f}.

\subsection{Datasets}

The following datasets are considered
\begin{itemize}
	\item The \texttt{NIPS} dataset\footnote{\url{https://archive.ics.uci.edu/ml/datasets/NIPS+Conference+Papers+1987-2015}}, which contains word counts (with stop words removed) of all the articles published at the NIPS\footnote{Now called NeurIPS.} conference between 1987 and 2015. We grouped the articles per publication year, yielding an observation matrix of size $11463 \times 29$. We obtained $K = 3$.
	\item The \texttt{last.fm} dataset, based on the so-called ``last.fm 1K'' users\footnote{\url{http://ocelma.net/MusicRecommendationDataset/}}, which contains the listening history with timestamps information of users of the music website last.fm. We preprocessed this dataset to obtain the monthly evolution of the listening counts of artists with at least 20 different listeners. This yields a dataset of size $7017 \times 53$ (i.e., we have the listening history of 7017 artists over 53 months). We obtained $K = 5$.
	\item The \texttt{ICEWS} dataset\footnote{\url{https://github.com/aschein/pgds}}, an international relations dataset, which contains the number of interactions between two countries for each day of the year 2003. The matrix is of size $6197 \times 365$. We obtained $K = 5$.
\end{itemize}

\subsection{Experimental results}

As previously mentioned, the test set consists of 10 \% of the columns of the data matrix $\mathbf{V}$, always including the last one. The KLE will be computed separately on all the columns minus the last one (denoted by "S" for smoothing), and on the last one (denoted by "F" for forecasting). Their averaged values over the 25 split-initialization pairs are reported on Table~\ref{table-nips} for the \texttt{NIPS} dataset, on Table~\ref{table-last} for the \texttt{last.fm} dataset, and on Table~\ref{table-icews} for the \texttt{ICEWS} dataset.

\begin{table*}[t]
	\centering
	\begin{tabular}{ccc}
        \toprule
        Model & KLE-S & KLE-F \\
        \midrule
        GaP (App.~\ref{app-f}) & $6.19\times 10^4 \pm 9.69\times 10^3$ & $1.08\times 10^5 \pm 2.67\times 10^3$ \\
        Rate~(\ref{sec-alg-map-rate}) & $6.07\times 10^4 \pm 8.97\times 10^3$ & $1.03\times 10^5 \pm 3.53\times 10^3$ \\
        Hier~(\ref{sec-alg-map-hier}) & $6.06\times 10^4 \pm 9.18\times 10^3$ & $3.24\times 10^5 \pm 2.09\times 10^5$ \\
        Shape~(\ref{sec-alg-map-shape}) & $9.37\times 10^4 \pm 2.99\times 10^4$ & $1.30\times 10^5 \pm 2.35\times 10^4$ \\
        BGAR~(\ref{sec-alg-map-bgar}) & $6.17\times 10^4 \pm 8.24\times 10^3$ & $1.36\times 10^5 \pm 2.00\times 10^3$ \\
        \toprule
	\end{tabular}
	\caption{Prediction results on the \texttt{NIPS} dataset. Lower values are better. The mean and standard deviation of each metric are reported over 25 runs.}
	\label{table-nips}
\end{table*}

\begin{table*}[t]
	\centering
	\begin{tabular}{ccc}
        \toprule
        Model & KLE-S & KLE-F \\
        \midrule
        GaP (App.~\ref{app-f}) & $1.30\times 10^4 \pm 3.35\times 10^2$ & $6.89\times 10^3 \pm 5.84\times 10^1$ \\
        Rate~(\ref{sec-alg-map-rate}) & $1.23\times 10^4 \pm 2.35\times 10^2$ & $7.76\times 10^3 \pm 4.13\times 10^1$ \\
        Hier~(\ref{sec-alg-map-hier}) & $1.23\times 10^4 \pm 2.95\times 10^2$ & $6.35\times 10^3 \pm 1.57\times 10^3$ \\
        Shape~(\ref{sec-alg-map-shape}) & $1.58\times 10^4 \pm 2.45\times 10^3$ & $2.04\times 10^4 \pm 9.92\times 10^3$ \\
        BGAR~(\ref{sec-alg-map-bgar}) & $1.24\times 10^4 \pm 3.00\times 10^2$ & $9.65\times 10^3 \pm 2.91\times 10^3$ \\
        \toprule
	\end{tabular}
	\caption{Prediction results on the \texttt{last.fm} dataset. Lower values are better. The mean and standard deviation of each metric are reported over 25 runs.}
	\label{table-last}
\end{table*}

\begin{table*}[t]
	\centering
	\begin{tabular}{ccc}
        \toprule
        Model & KLE-S & KLE-F \\
        \midrule
        GaP (App.~\ref{app-f}) & $8.91 \times 10^4 \pm 2.82 \times 10^3$ & $1.59 \times 10^3 \pm 6.34 \times 10^1$ \\
        Rate~(\ref{sec-alg-map-rate}) & $9.17 \times 10^4 \pm 2.99 \times 10^3$ & $1.62 \times 10^3 \pm 7.57 \times 10^1$ \\
        Hier~(\ref{sec-alg-map-hier}) & $9.11 \times 10^4 \pm 2.81 \times 10^3$ & $1.62 \times 10^3 \pm 9.02 \times 10^1$ \\
        Shape~(\ref{sec-alg-map-shape}) & $9.95 \times 10^4 \pm 3.82 \times 10^3$ & $1.84 \times 10^3 \pm 1.37 \times 10^2$ \\
        BGAR~(\ref{sec-alg-map-bgar}) & $8.99 \times 10^4 \pm 2.80 \times 10^3$ & $1.78 \times 10^3 \pm 8.62 \times 10^1$ \\
        \toprule
\end{tabular}
	\caption{Prediction results on the \texttt{ICEWS} dataset. Lower values are better. The mean and standard deviation of each metric are reported over 25 runs.}
	\label{table-icews}
\end{table*}

All considered temporal models achieve comparable predictive performance, both on smoothing and forecasting tasks, and the slight advantage of one method over the others seems to be data-dependent. The methods ``Rate'' and ``Hier'' rank first or second most of time but not always significantly so. The method ``Shape" tends to achieve worse results than others, but not consistently. This suggests that in the MAP estimation framework, prior distributions do not act as strong regularization terms, and are outweighed by the likelihood term. Moreover, the baseline based on the GaP model also achieves good performance. This might be attributed to the high correlation between successive columns on the datasets, and as such, using adjacent columns for estimation is reasonable.

We conclude this section by saying a few words about computational complexity, which can act as a differentiating criterion, as all models achieve similar predictive performance. The algorithms for chains involving the rate parameters of the Gamma distribution, described in Sections~\ref{sec-alg-map-rate} and~\ref{sec-alg-map-hier} have closed-form update rules for all their variables. This leads to efficient block-descent algorithms. This is in contrast with the algorithms for the model based on the chaining on the shape parameter (Section~\ref{sec-alg-map-shape}), which involves solving $K(N-1)$ equations numerically at each iteration, and the algorithm for BGAR (Section~\ref{sec-alg-map-bgar}), which involves serially solving $2K(N-1)$ order-3 polynomials at each iteration.

\section{Conclusion} \label{sec-ccl}

In this paper, we have reviewed existing temporal NMF models in a unified MAP framework and introduced a new one. These models differ by the choice of the Markov chain structure used on the activation coefficients to induce temporal correlation. We began by studying the previously proposed Gamma Markov chains of the NMF literature, only to find that they all share the same drawback, namely the absence of a well-defined stationary distribution. This leads to problematic behaviors from the generative perspective, because the realizations of the chains are degenerate (although this is not necessarily a problem in MAP estimation). We then introduced a Markovian process from the time series literature, called BGAR(1), which overcomes this limitation, and which, to the best of our knowledge, had never been exploited for learning tasks.

We then derived MAP estimation algorithms in the context of a Poisson likelihood, which allowed for a comprehensive comparison on a prediction task on real datasets. As it turns out, we cannot claim that there is a single model which outperforms all the others. It seems that in our framework, MAP estimation will tend to homogenize the performance of all the models.

Future work will focus on finding a way to perform inference with the BGAR prior for a less restrictive set of hyperparameters, which might increase the performance of this particular model. Moreover, it should be noted that this work can easily be extended to other likelihoods than Poisson thanks to the MM framework. To illustrate this, we present in Appendix~\ref{app-g} the derivation of an algorithm for MAP estimation in a model consisting in an exponential likelihood and BGAR(1) temporal prior. Finally, it would be interesting to carry out similar experimental work within a fully Bayesian estimation paradigm, which might make the differences between the models more striking.

\section*{Acknowledgments}

This work has received funding from the European Research Council (ERC) under the European Union’s Horizon 2020 research and innovation program under grant agreement No 681839 (project FACTORY). Louis Filstroff and Olivier Gouvert were with IRIT, Univ. Toulouse, CNRS, France at the time this research was conducted.

\clearpage

\begin{appendices}

\section{The Beta-Prime distribution} \label{app-a}

Distribution for a continuous random variable in $[0,+\infty[$, with parameters $\alpha > 0$, $\beta > 0$, $p > 0$ and $q > 0$. Its probability density function writes, for $x \geq 0$:
\begin{equation}
f(x;\alpha,\beta,p,q) = \frac{p \left( \frac{x}{q} \right)^{\alpha p -1} \left(1 + \left(\frac{x}{q}\right)^p \right)^{-\alpha - \beta}}{q \text{B}(\alpha, \beta)}.
\end{equation}

\section{BGAR(1) linear correlation} \label{app-b}

We have between two successive values $h_n$ and $h_{n+1}$:
\begin{align}
& \text{corr}(h_n, h_{n+1}) \notag \\
& = \frac{\mathbb{E}(h_n h_{n+1}) - \mathbb{E}(h_n)\mathbb{E}(h_{n+1})}{\sigma(h_n)\sigma(h_{n+1})} \\
& = \frac{\mathbb{E}(h_n(b_{n+1}h_n + \epsilon_{n+1})) - \mathbb{E}(h_n)\mathbb{E}(h_{n+1})}{\sigma(h_n)\sigma(h_{n+1})} \\
& = \frac{\mathbb{E}(b_{n+1})\mathbb{E}(h_n^2) + \mathbb{E}(h_n)\mathbb{E}(\epsilon_{n+1}) - \mathbb{E}(h_n)\mathbb{E}(h_{n+1})}{\sigma(h_n)\sigma(h_{n+1})} \\
& = \frac{\frac{\alpha \rho}{\alpha \rho + \alpha(1-\rho)} \frac{\alpha(\alpha+1)}{\beta^2} + \frac{\alpha}{\beta} \frac{\alpha(1-\rho)}{\beta} - \frac{\alpha}{\beta} \frac{\alpha}{\beta} }{\frac{\alpha}{\beta^2}} \\
& = \rho.
\end{align}

\section{Constrained optimization} \label{app-c}

We want to optimize $G_2(\mathbf{W};\tilde{\mathbf{W}})$ w.r.t. $\mathbf{W}$ s.t. $\sum_f{w_{fk}} = 1$. Rewriting this with Lagrange multipliers $\boldsymbol{\lambda} = [\lambda_1,\dotsc,\lambda_{K}]^{\text{T}}$, this is tantamount to
\begin{equation}
\min_{\mathbf{W}, \boldsymbol{\lambda}} G_2(\mathbf{W};\tilde{\mathbf{W}}) + \sum_k \lambda_k (||\mathbf{w}_k||_1 - 1).
\end{equation}
Deriving w.r.t $w_{fk}$ yields
\begin{equation}
w_{fk} = \frac{p'_{fk}}{q'_k + \lambda_k}. \label{eq-lagr}
\end{equation}
We retrieve the constraint by summing this expression over $f$. This gives the expression of the Lagrange multiplier: ${\lambda_k = \sum_f p'_{fk} - q'_k}$. Substituting this expression into Eq.~\eqref{eq-lagr}, we obtain the following update rule
\begin{equation}
w_{fk} = \frac{p'_{fk}}{\sum_f p'_{fk}}.
\end{equation}

\section{Algorithms with missing values} \label{app-d}

In the context of missing values, let us consider a mask matrix $\mathbf{M}$ of size $F \times N$ such that $m_{fn} = 1$ if the entry $v_{fn}$ is observed and 0 otherwise. The likelihood term can then be written as
\begin{equation}
-\log p(\mathbf{V}|\mathbf{H};\mathbf{W}) = - \sum_{f,n} m_{fn} \log p(v_{fn}|[\mathbf{WH}]_{fn}).
\end{equation}
The auxiliary function $G_1$ of Eq.~\eqref{eq-g1} and $G_2$ of Eq.~\eqref{eq-g2} can then be written is the same way, with
\begin{equation}
p_{kn} = \tilde{h}_{kn} \sum_f w_{fk} \frac{m_{fn} v_{fn}}{[\mathbf{W\tilde{H}}]_{fn}}, \quad q_{kn} = \sum_f m_{fn} w_{fk},
\end{equation}
for $G_1$, and
\begin{equation}
p'_{fk} = \tilde{w}_{fk} \sum_n h_{kn} \frac{m_{fn} v_{fn}}{[\mathbf{\tilde{W}H}]_{fn}}, \quad q'_{kn} = \sum_n m_{fn} h_{kn},
\end{equation}
for $G_2$.

\section{Hyperparameter grids} \label{app-e}

For all methods, we have considered constant hyperparameters w.r.t. $k$ (for example $\alpha_k = \alpha$ for all $k$). Additional details regarding each method can be found in the list below.
\begin{itemize}
    \item For GaP, we have considered a two-dimensional grid for the parameters $\alpha$ and $\beta$. Values were $\alpha = \{0.1, 1, 10 \}$ and $\beta = \{ 0.1, 1, 10 \}$.
	\item For ``Rate'', we have set $\alpha = \beta$, which implies that $\mathbb{E}(h_{kn}|h_{k(n-1)}) = h_{k(n-1)}$. We considered a one-dimensional grid with values $\{ 1.5, 10, 100 \}$.
	\item For ``Hier'', we have set $\alpha_h = \beta_h$, and $\alpha_z = \beta_z$, which implies $\mathbb{E}(z_{kn}|h_{k(n-1)}) = h_{k(n-1)}$ and $\mathbb{E}(h_{kn}|z_{kn}) = z_{kn}$. We considered a two-dimensional grid with values $\alpha_h = \{1.5, 10, 100\}$ and $\alpha_z = \{1.5, 10, 100\}$.
	\item For ``Shape'', we have set $\alpha = \beta$, which implies that $\mathbb{E}(h_{kn}|h_{k(n-1)}) = h_{k(n-1)}$. We considered a one-dimensional grid with values $\{ 0.1, 1, 10 \}$.
	\item For ``BGAR'', we have set $\rho = 0.9$, and considered a two-dimensional grid for the parameters $\alpha$ and $\beta$ (note that setting $\rho = 0.9$ implies $\alpha > 10$ in our MAP framework). Values were $\alpha = \{11, 110, 1100 \}$ and $\beta = \{ 0.1, 1, 10 \}$.
\end{itemize}

\section{MAP estimation in the GaP model} \label{app-f}

The prior distribution on $\mathbf{H}$ is such that
\begin{equation}
    h_{kn} \sim \text{Gamma}(\alpha_k, \beta_k).
\end{equation}
MAP estimation amounts to minimizing
\begin{align}
    C(\mathbf{W}, \mathbf{H}) = & -\log p(\mathbf{V}|\mathbf{H};\mathbf{W}) \\
    & + \sum_{k,n} \left( (1-\alpha_k) h_{kn} + \beta_k h_{kn} \right), \notag
\end{align}
which leads to the following MM update rule \citep{dikmen2012maximum}
\begin{equation}
    h_{kn} = \left\{
    \begin{array}{ll}
    0 & \text{if}~p_{kn} + \alpha_k - 1 \leq 0, \\
    \frac{p_{kn} + \alpha_k - 1}{q_{kn} + \beta_k} & \text{else}.
    \end{array}
    \right.
\end{equation}

\section{BGAR with an exponential likelihood} \label{app-g}

Another popular likelihood used in probabilistic NMF models is the Exponential likelihood  \citep{fevotte2009nonnegative,hoffman2010bayesian} which writes
\begin{equation}
    v_{fn} \sim \text{Exp} \left( \frac{1}{[\mathbf{WH}]_{fn} } \right),
\end{equation}
where $\text{Exp}(\beta)= \text{Gamma}(1,\beta)$ refers to the exponential distribution with mean $1/\beta$. This model underlies so-called Itakura-Saito NMF and has most notably been used in audio signal processing applications. We consider MAP estimation in this model with BGAR(1) prior on $\mathbf{H}$. To do so, we resort to the same MM scheme than what was presented in the beginning of Section~\ref{sec-map}. In this case, the majorization of the likelihood term is known from \citep{cao1999cross,fevotte2011algorithms}. In particular, the function
\begin{equation}
    G(\mathbf{H};\tilde{\mathbf{H}}) = \sum_{k,n} \left( \frac{p_{kn}}{h_{kn}} + q_{kn} h_{kn} \right),
\end{equation}
with the notations
\begin{equation}
    p_{kn} = \tilde{h}_{kn}^2 \sum_f \frac{w_{fk} v_{fn}}{[\mathbf{W} \tilde{\mathbf{H}}]_{fn}^2 }, \quad q_{kn} = \sum_f \frac{w_{fk}}{[\mathbf{W} \tilde{\mathbf{H}}]_{fn}},
\end{equation}
is a tight auxiliary function of $-\log p(\mathbf{V};\mathbf{W},\mathbf{H})$ at $\mathbf{H} = \tilde{\mathbf{H}}$ (up to irrelevant constants).

The exact same constraints on $h_{kn}$ and admissible values of hyperparameters detailed in Section~\ref{sec-alg-map-bgar} apply. The minimization w.r.t. $h_{kn}$ then amounts to solving an order-4 polynomial equation
\begin{equation}
    a_{4,kn} h_{kn}^4 + a_{3,kn} h_{kn}^3 + a_{2,kn} h_{kn}^2 + a_{1,kn} h_{kn}^1 + a_{0,kn} = 0,
\end{equation}
whose coefficients are detailed below.\\

For $h_{k1}$:
\begin{align}
    a_{4,kn} & = 0, \\
    a_{3,kn} & = -(q_{k1} + \beta_k ( 1-b_{k2} )), \\
    a_{2,kn} & = (q_{k1} + \beta_k ( 1-b_{k2} )) d_{k1} - (1-\alpha_k) - (1-\gamma_k), \\
    a_{1,kn} & = (1-\alpha_k) d_{k1} + p_{k1}, \\
    a_{0,kn} & = -p_{k1} d_{k1}.
\end{align}

For $h_{kn}, n \in \{2,\dotsc,N-1 \}$:
\begin{align}
    a_{4,kn} & = -(q_{kn} + \beta_k(1-b_{k(n+1)})), \\
    a_{3,kn} & = (q_{kn} + \beta_k(1-b_{k(n+1)}))(c_{kn} + d_{kn}) - 2(1-\gamma_k), \\
    a_{2,kn} & = -c_{kn} d_{kn} (q_{kn} + \beta_k(1-b_{k(n+1)})) \\
    & \quad + p_{kn} + (1-\gamma_k)(c_{kn} + d_{kn}), \notag \\
    a_{1,kn} & = -p_{kn} (c_{kn} + d_{kn}), \\
    a_{0,kn} & = p_{kn} c_{kn} d_{kn}.
\end{align}

For $h_{kN}$:
\begin{align}
    a_{4,kn} & = 0, \\
    a_{3,kn} & = q_{kN} + \beta_k, \\
    a_{2,kn} & = -(q_{kN} + \beta_k)c_{kN}+ (1-\gamma_k), \\
    a_{1,kn} & = -p_{kN}, \\
    a_{0,kn} & = c_{kN} p_{kN}.
\end{align}

\end{appendices}

\clearpage

\bibliographystyle{apalike}
\bibliography{BibFinal}

\end{document}